\def\etal{{\em et al.\/}\, }
\def\etal{{\em et al.\/}\, }
\def\st{\mbox{s.t.}}
\def\eg{\mbox{e.g.}}
\def\ie{\mbox{i.e.}}
\def\wrt{\mbox{w.r.t.}}
\def\btheta{{\boldsymbol{\theta}}}
\def\mA{{\mathcal A}}
\def\mL{{\mathcal L}}
\def\0{{\bf 0}}
\def\1{{\bf 1}}
\def\bF{{\bf F}}
\def\bG{{\bf G}}
\def\bO{{\bf O}}
\def\bW{{\bf W}}
\def\bX{{\bf X}}
\def\bx{{\bf x}}
\def\bz{{\bf z}}
\def\mmR{{\mathbb R}}
\def\trsp{{\sf T}}
\def\citep{\cite}
\def\citet{\cite}
\newtheorem{prop}{Proposition}
\DeclareMathOperator*{\argmax}{arg\,max} 
\newcommand{\kui}[1]{{\color{black}#1}}%
\newcommand{\zhuang}[1]{{\color{black}#1}}
\newcommand{\bohan}[1]{{\color{black}#1}}
\newcommand{\liu}[1]{{\color{black}#1}}
\newcommand{\jing}[1]{{\color{black}#1}}
\newcommand{\guo}[1]{{\color{black}#1}}
\newcommand{\tabincell}[2]{\begin{tabular}{@{}#1@{}}#2\end{tabular}}
\newtheorem{proof}{Proof}
\title{Discrimination-aware Channel Pruning \\ for Deep Neural Networks}
\author{
	Zhuangwei Zhuang$^1$\thanks{Authors contributed equally.}~~, Mingkui Tan$^{1*\dagger}$, Bohan Zhuang$^{2*}$, Jing Liu$^{1*}$, \\ \textbf{Yong Guo$^1$, Qingyao Wu$^1$, Junzhou Huang$^{3,4}$, Jinhui Zhu$^{1}$\thanks{Corresponding author.}}  \\
	$^1$South China University of Technology, $^2$The University of Adelaide, \\ $^3$University of Texas at Arlington, $^4$Tencent AI Lab \\
	\{z.zhuangwei, seliujing, guo.yong\}@mail.scut.edu.cn, jzhuang@uta.edu \\ \{mingkuitan, qyw, csjhzhu\}@scut.edu.cn, bohan.zhuang@adelaide.edu.au
}
\begin{document}
	
	\maketitle
	
	\begin{abstract}
		Channel pruning is one of the predominant approaches for deep model compression. Existing pruning methods either train from scratch with sparsity constraints on channels, or  minimize the reconstruction error between the pre-trained feature maps and the compressed ones. Both strategies suffer from some limitations: the former kind is computationally expensive and difficult to converge, whilst the latter kind optimizes the reconstruction error but ignores the discriminative power of channels. In this paper, we investigate a simple-yet-effective method \kui{called} discrimination-aware channel pruning (DCP) to choose those channels that really contribute to discriminative power. To this end, we introduce additional discrimination-aware losses into the network to increase the discriminative power of intermediate layers and then select the most discriminative channels for each layer \kui{by considering the} additional loss and \zhuang{the reconstruction error}. Last, we propose a greedy algorithm to \kui{conduct} channel selection and parameter optimization in an iterative way. Extensive experiments demonstrate the effectiveness of our method. For example, on ILSVRC-12, our pruned ResNet-50 with 30\% reduction of channels outperforms the baseline model by 0.39\% in top-1 accuracy.
	\end{abstract}
	
	\section{Introduction}
	\label{Introduction}
	Since 2012, convolutional neural networks (CNNs) have achieved great success in many computer vision tasks,~\eg, image classification~\liu{\cite{krizhevsky2012imagenet,srivastava2015training}}, face recognition~\cite{Schroff_2015_CVPR,sun2015deepid3}, object detection~\cite{redmon2016you,ren2015faster}, 
	\liu{image generation~\cite{goodfellow2014generative,pmlr-v80-cao18a}}
	and video analysis~\cite{simonyan2014two,wang2016temporal}. However, deep models are often with a huge number of parameters and the model size is very large, which incurs not only huge memory requirement but also unbearable computation burden. As a result, deep learning methods are hard to be applied on hardware devices with limited \zhuang{storage and computation} resources, \zhuang{such as cell phones}.
	To address this problem, model compression is an effective approach, which aims to reduce the model redundancy without significant degeneration in performance.
	
	Recent studies on model compression mainly contain three categories, namely, quantization~\cite{rastegari2016xnor,zhou2017}, sparse or low-rank compressions~\cite{guo2016dynamic,han2015deep}, and channel pruning~\cite{liu2017learning,luo2017thinet,yu2017nisp,ye2018rethinking}. Network quantization seeks to reduce the model size by quantizing float weights into \kui{low-bit weights} (\eg, 8 bits or even 1 bit). However, the training is very difficult due to \bohan{the introduction of} quantization errors. Making sparse connections can reach high compression rate in theory, but it may generate irregular convolutional kernels which need sparse matrix operations for accelerating the computation. In contrast, channel pruning reduces the model size and speeds up the inference by removing redundant channels directly, thus little additional effort is required for fast inference. On top of channel pruning, other compression methods such as quantization can be applied. In fact, pruning redundant channels \kui{often} helps to improve the efficiency of quantization and achieve more compact \bohan{models}.
	
	Identifying the informative (or important) channels, also known as channel selection, is a key issue in channel pruning. Existing \bohan{works have} exploited two strategies, namely, training-from-scratch methods which directly learn the importance of channels with sparsity regularization~\cite{alvarez2016learning,liu2017learning,wen2016learning}, and reconstruction-based methods~\cite{he2017channel,hu2016network,li2016pruning,luo2017thinet}. Training-from-scratch is very difficult to train especially for very deep networks on large-scale datasets. Reconstruction-based methods seek to do channel pruning by minimizing the reconstruction error of feature maps between the pruned model and a pre-trained model~\cite{he2017channel,luo2017thinet}. These methods \zhuang{suffer from} a critical limitation: an actually redundant channel would be mistakenly kept to minimize the reconstruction error of feature maps. Consequently, these methods may \zhuang{result in} apparent drop in accuracy on more compact and deeper models such as ResNet~\cite{he2016deep} \zhuang{for large-scale datasets}.
	
	In this paper, we aim to overcome the drawbacks of both strategies. First, in contrast to existing methods~\cite{he2017channel,hu2016network,li2016pruning,luo2017thinet}, we assume and highlight that an informative channel, no matter where it is, should \zhuang{own} discriminative power; otherwise it should be deleted. \kui{Based on this intuition, we propose to find the channels with true discriminative power \bohan{for} the network.} Specifically, relying on \kui{a} pre-trained model, we add multiple additional losses (\ie, \kui{discrimination-aware} losses) evenly to the network. For each \jing{stage}, we first do fine-tuning using one additional loss and the final loss to improve \kui{the} discriminative power \kui{of intermediate layers}. And then, we conduct channel pruning \kui{for each layer involved in the considered stage} by considering both \jing{the} additional loss and \bohan{the} reconstruction error of feature maps. In this way, \kui{we are able to make a balance between the discriminative power of channels and the feature map reconstruction}.
	
	
	Our main contributions are summarized as follows. {First}, we propose a discrimination-aware channel pruning (DCP) scheme for compressing deep models with \bohan{the} introduction of additional losses. DCP is able to find the channels with true discriminative power. DCP \kui{prunes} and \kui{updates} the model stage-wisely using a \jing{proper discrimination-aware loss} and the final loss. As a result, it is not sensitive to the initial pre-trained model. \zhuang{{Second}, we
		formulate the channel selection problem as an $\ell_{2,0}$-norm constrained optimization problem}
	\kui{and} \jing{propose} a greedy method to solve the resultant optimization problem.
	Extensive experiments demonstrate the superior performance of our method, especially on deep ResNet. On ILSVRC-12~\cite{deng2009imagenet}, when pruning 30\% channels of ResNet-50, DCP improves the original ResNet model by 0.39\% in top-1 accuracy. \kui{Moreover, when pruning 50\% channels of ResNet-50, DCP outperforms ThiNet~\cite{luo2017thinet}, a state-of-the-art method, by 0.81\% and 0.51\% in top-1 and top-5 accuracy, respectively.}
	
	\section{Related studies}
	\paragraph{Network quantization.}
	In~\cite{rastegari2016xnor}, Rastegari~\etal \bohan{propose to} quantize parameters in the network into $+1/-1$. The proposed BWN and \bohan{XNOR-Net} can achieve comparable accuracy to their full-precision counterparts on large-scale datasets. In~\cite{zhou2016dorefa}, high precision weights, activations and gradients in CNNs are quantized to low bit-width version, \bohan{which brings great benefits for reducing resource requirement and power consumption in hardware devices. }
	By introducing zero as the third quantized value, ternary weight networks (TWNs)~\cite{li2016ternary,zhu2016trained} can achieve higher accuracy than binary neural networks. Explorations on \liu{quantization~\cite{zhou2017,zhuang2018towards}} show that quantized networks can even outperform the full precision networks when quantized to the values with more bits,~\eg, 4 or 5 bits.
	
	\paragraph{Sparse or low-rank connections.}
	To reduce the storage requirements of neural networks, Han~\etal suggest that neurons with zero input or output connections can be safely removed from the network~\cite{han2015learning}. With the help of the $\ell_1/\ell_2$ regularization, weights are pushed to zeros during training. Subsequently, the compression rate of AlexNet can reach $35\times$ with the combination of pruning, quantization, and Huffman coding~\cite{han2015deep}. Considering the importance of parameters is changed during weight pruning, Guo~\etal propose dynamic network surgery (DNS) in~\cite{guo2016dynamic}. Training with sparsity constraints~\cite{srinivas2017training,wen2016learning} has also been studied to reach higher compression rate.
	
	Deep models often contain a lot of correlations among channels. To remove such redundancy, low-rank approximation approaches have been widely studied~\cite{denton2014exploiting,gong2014compressing,jaderberg2014speeding,sindhwani2015structured}. For example,  Zhang~\etal speed up VGG for 4$\times$ with negligible performance degradation on ImageNet~\cite{zhang2016accelerating}. However, low-rank approximation approaches are unable to remove those redundant channels that do not contribute to the discriminative power of the network.
	
	\paragraph{Channel pruning.}
	Compared with network quantization and sparse connections, channel pruning removes both channels and the related filters from the network. Therefore, it can be well supported by existing deep learning libraries with little additional effort. The key issue of channel pruning is to evaluate the importance of channels. Li~\etal measure the importance of channels by calculating the sum of absolute \bohan{values} of weights~\cite{li2016pruning}. Hu~\etal define average percentage of zeros (APoZ) to measure the activation of neurons~\cite{hu2016network}. Neurons with higher values of APoZ are considered more redundant in the network. With \bohan{a} sparsity regularizer in \bohan{the} objective function, training-based methods~\cite{alvarez2016learning,liu2017learning} are proposed to learn the compact models in the training phase. With the consideration of efficiency, reconstruction-methods~\cite{he2017channel,luo2017thinet} transform the channel selection problem into the optimization of reconstruction error and solve it by a greedy \zhuang{algorithm} or \textmd{LASSO} regression.
	
	\section{Proposed method}
	\label{sec:proposed_method}
	
	\label{sec:problem_def}
	Let $\{\bx_i, y_i\}_{i=1}^{N}$ be the training samples, where $N$ indicates the number of samples. Given \kui{an} $L$-layer CNN model $M$, let 
	\zhuang{$\bW\in \mmR^{n\times c\times h_{f}\times z_{f}}$}
	be the model parameters~\wrt~the $l$-th convolutional layer (or block), \kui{as shown in Figure~\ref{fig:network_architecture}}. Here, \jing{$h_{f}$ and $z_{f}$} denote the height and width of filters, respectively; $c$ and $n$ denote the number of \textbf{input} and \textbf{output} channels, respectively. \zhuang{For convenience, hereafter we omit the layer index $l$.}
	Let \jing{$\bX \in \mmR^{N\times c\times h_{in} \times z_{in}}$} and \jing{$\bO\in \mmR^{N\times n \times h_{out} \times z_{out}}$} be the input feature maps and the involved output feature maps, respectively. Here, $h_{in}$ and $z_{in}$ denote the height and width of the input feature maps, respectively; $h_{out}$ and $z_{out}$ represent the height and width of the output feature maps, respectively.
	\kui{Moreover}, let \jing{$\bX_{i,k,:,:}$} be the feature map of the $k$-th channel for the $i$-th sample. $\bW_{j,k,:,:}$ denotes the parameters~\wrt~the $k$-th input channel and $j$-th output channel.
	The output feature map of the $j$-th channel for the $i$-th sample, denoted by $\bO_{i,j,:,:}$, is computed by
	\begin{equation}\label{eq:compute_z_hat}
		\begin{array}{ll}
			\jing{\bO_{i,j,:,:} = \sum_{k=1}^{c} \bX_{i,k,:,:} * \bW_{j,k,:,:},}
		\end{array}
	\end{equation}
	where $*$ denotes the convolutional operation.
	
	\kui{Given a pre-trained model $M$,} the task of \textbf{Channel Pruning} is to prune those redundant channels in $\bW$ to save the model size and accelerate the inference speed in Eq.~(\ref{eq:compute_z_hat}). In order to choose channels, we introduce a variant of $\ell_{2,0}$-norm
	\zhuang{$||\bW||_{2,0} = \sum_{k=1}^{c} \Omega(\sum_{j=1}^n||\bW_{j,k,:,:}||_F)$},
	where $\Omega(a) = 1$ if $a \neq 0$ and $\Omega(a) = 0$ if $a = 0$, and $||\cdot||_F$ represents the Frobenius norm.  To induce sparsity, we \kui{can} impose an $\ell_{2,0}$-norm constraint on $\bW$:
	\begin{equation}\label{eq:two-zero-norm}
		\begin{array}{ll}
			||\bW||_{2,0} = \sum_{k=1}^{c} \Omega(\sum_{j=1}^n||\bW_{j,k,:,:}||_F) \leq \kappa_l,
		\end{array}
	\end{equation}
	where $\kappa_l$ denotes the desired number of channels at the layer $l$. \kui{Or equivalently, given a predefined pruning rate $\eta \in (0, 1)$~\cite{alvarez2016learning,liu2017learning}, it follows that \zhuang{$\kappa_l = \lceil\eta c\rceil$}.}

	\begin{figure}[t]
		\centering
		\includegraphics[width=0.95\textwidth]{./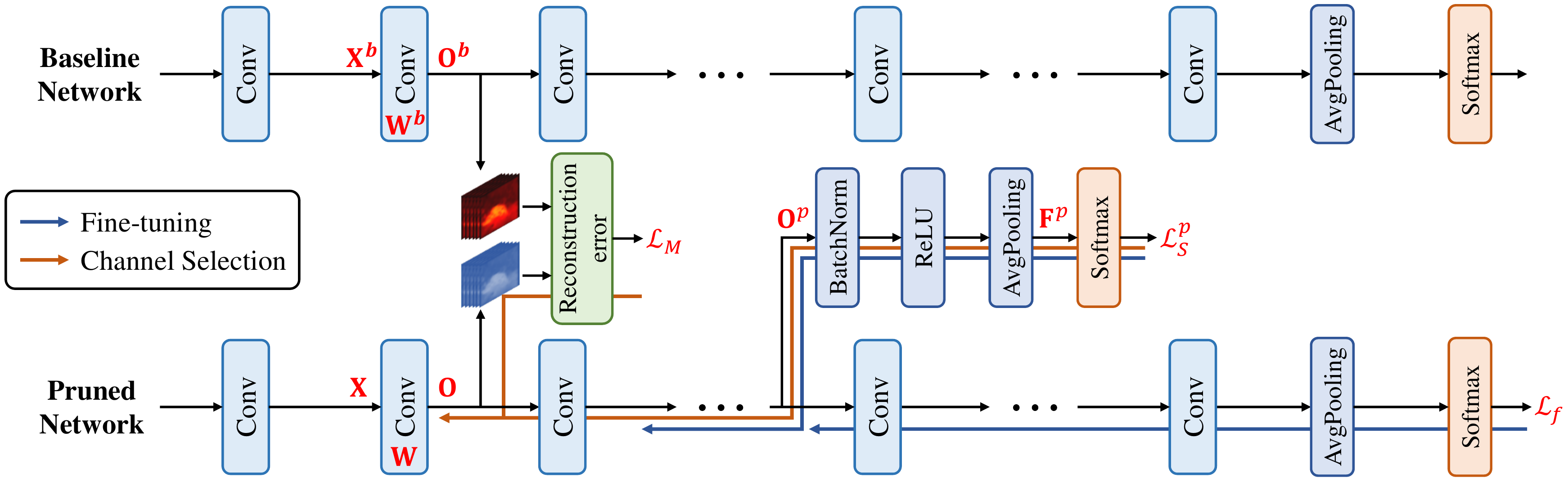}
		\caption{Illustration of discrimination-aware channel pruning. Here, $\mL_S^p$ denotes the discrimination-aware loss (e.g., cross-entropy loss) in the $L_p$-th layer, $\mL_M$ denotes the reconstruction loss, and $\mL_f$ denotes the final loss. For the $p$-th stage, we first fine-tune the pruned model by $\mL_S^p$ and $\mL_f$, then conduct the channel selection for each layer in $\{L_{p-1}+1,\dots,L_p\}$ with $\mL_S^p$ and $\mL_M$. }
		\label{fig:network_architecture}
	\end{figure}	
	
	\subsection{Motivations}
	Given a pre-trained model $M$, existing methods~\cite{he2017channel,luo2017thinet} conduct channel pruning by minimizing the reconstruction error of feature maps between the pre-trained model \kui{$M$} and the pruned one. Formally, the reconstruction error \kui{can be} measured by the mean squared error (MSE) between feature maps of the \jing{baseline} network and the pruned one as follows:
	\begin{equation}\label{eq:compute_lmse}
		\begin{array}{ll}
			\zhuang{\mL_M(\bW)=\frac{1}{2Q}\sum_{i=1}^N\sum_{j=1}^n||\bO_{i,j,:,:}^b - \bO_{i,j,:,:}||_F^2,}
		\end{array}
	\end{equation}
	where $Q=N\cdot n\cdot h_{out}\cdot z_{out}$ and \jing{$\bO_{i,j,:,:}^b$} denotes the
	\guo{feature maps of the baseline network}.
	Reconstructing feature maps can preserve most information in the learned model, but it has two limitations. {First}, the pruning performance is highly affected by the quality of the pre-trained model $M$. If the baseline model is not well trained, the pruning performance can be very limited. {Second}, to achieve the minimal reconstruction error, some channels in intermediate layers may be mistakenly kept, \kui{even though} they are actually not relevant to the discriminative power of the network. This issue will be even severer when the network becomes deeper.

	In this paper, we seek to do channel pruning by keeping those channels that really contribute to the discriminative power of the network. \kui{I}n practice, \kui{however}, it is very hard to \zhuang{measure} the discriminative power of channels due to the complex operations (such as ReLU activation and Batch Normalization) in CNNs. One may consider one channel as an important one if the final loss $\mL_f$ \kui{would} sharply increase without it. However, it is not practical when the network is very deep. In fact, for \kui{deep models}, its shallow layers often have little discriminative power \kui{due to the long path of propagation}.
	
	To increase the discriminative power of intermediate layers,
	one can introduce additional losses to the intermediate layers of the deep networks~\cite{szegedy2015going, lee2015deeply, guo2016shallow}.
	In this paper,
	we insert $P$ discrimination-aware losses \zhuang{$\{\mL^p_S\}_{p=1}^P$} evenly into the network, as shown in Figure~\ref{fig:network_architecture}. Let $\{L_1, ..., L_P, L_{P+1}\}$ be the layers at which we put the losses, with $L_{P+1}=L$ being the final layer. For the $p$-th loss $\mL^p_S$, we consider doing channel pruning for layers $l\in  \{L_{p-1}+1, ..., L_{p}\}$, where $L_{p-1}=0$ if $p=1$. It is worth mentioning that, we can add one loss to each layer of the network, where we have $L_l = l$. However, this can be very computationally expensive yet not necessary.

	\subsection{Construction of discrimination-aware loss}
	
	The construction of discrimination-aware loss $\mL^{p}_S$ is very important in our method. As shown in Figure~\ref{fig:network_architecture}, each loss uses the output of layer $L_p$ as \bohan{the} input feature maps. To make the computation of \kui{the} loss feasible, we impose an average pooling operation over the feature maps. Moreover, to accelerate the convergence, we \kui{shall} apply batch normalization~\liu{\cite{ioffe2015batch,guo2018double}} and ReLU~\cite{nair2010rectified} before doing the average pooling. In this way, the input feature maps for the loss at layer $L_p$, denoted by $\bF^p(\bW)$, \kui{can be} computed by
	\begin{equation}
		\begin{array}{ll}
			{\bF}^p(\bW) = {\mathrm{AvgPooling}}(\mathrm{ReLU}(\mathrm{BN}(\bO^p))),
		\end{array}
		\label{eq:bar_z}
	\end{equation}
	where $\bO^p$ represents the output feature maps of layer $L_p$. Let $\bF^{(p,i)}$ be the feature maps~\wrt~the $i$-th example. The \kui{discrimination-aware loss}~\wrt~the $p$-th loss is formulated as
	\begin{equation}
		\begin{array}{ll}
			\mL^p_S(\bW) = - \frac{1}{N} \left[
			\sum_{i=1}^N \sum_{t=1}^m I\{y^{(i)}=t\} \log{\frac{e^{\btheta_t^\top \bF^{(p,i)}}}{\sum_{k=1}^m e^{\btheta_k^\top \bF^{(p,i)}}}} \right],
		\end{array}
		\label{eq:compute_ls}
	\end{equation}
	where $I\{\cdot\}$ is the indicator function, $\btheta\in \mmR^{n_p\times m}$ denotes the classifier weights of the fully connected layer, $n_p$ denotes the number of input channels of the fully connected layer \jing{and $m$ is the number of classes.} \kui{Note that we can use other losses such as angular softmax loss~\cite{liu2017sphereface} as the additional loss.}
	
	
	In practice, since a pre-trained model contains very rich information about the learning task, similar to~\cite{luo2017thinet}, we also hope to reconstruct the feature maps in the pre-trained model. By considering both cross-entropy loss and reconstruction error, we have a joint loss function as follows:
	\begin{equation}
		\begin{array}{ll}
			\mL(\bW) = \mL_M(\bW) + \lambda \mL_S^p(\bW),
		\end{array}
		\label{eq:compute_loss}
	\end{equation}
	where $\lambda$ balances the two terms.
	\begin{prop}\textbf{\emph{(Convexity of the loss function) }} \label{prop: loss}
		\kui{Let $\bW$ be the model parameters of a considered layer.} Given \zhuang{the mean square loss and the cross-entropy loss} defined in Eqs.~(\ref{eq:compute_lmse}) and (\ref{eq:compute_ls}), then the joint loss function $ \mL (\bW) $ is convex~\wrt~$\bW$.\footnote{\zhuang{The proof can be found in Section~\ref{sec:theoretical_analysis} in the supplementary material.}}
	\end{prop}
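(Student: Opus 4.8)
The plan is to split $\mL(\bW)=\mL_M(\bW)+\lambda\mL_S^p(\bW)$ into its two summands, show that each is convex in $\bW$ (regarded as a point of the vector space $\mmR^{n\times c\times h_f\times z_f}$), and then conclude from the elementary fact that a nonnegative combination of convex functions is convex, using $\lambda\ge 0$. The unifying observation is that, once the input feature maps $\bX$ and the auxiliary classifier weights $\btheta$ are held fixed, every $\bW$-dependent quantity entering the two losses is an \emph{affine} function of $\bW$, so each loss is a standard convex function precomposed with an affine map.

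First I would handle $\mL_M$. By Eq.~(\ref{eq:compute_z_hat}), $\bO_{i,j,:,:}=\sum_{k=1}^{c}\bX_{i,k,:,:}\ast\bW_{j,k,:,:}$ is linear in $\bW$ since convolution is linear in its kernel. Vectorizing, there is a matrix $\bA$ built from the fixed inputs with $\mathrm{vec}(\bO)=\bA\,\mathrm{vec}(\bW)$, so that $\mL_M(\bW)=\frac{1}{2Q}\|\mathrm{vec}(\bO^b)-\bA\,\mathrm{vec}(\bW)\|_2^2$. This is the squared Euclidean norm, a convex function, composed with an affine map, hence convex; equivalently its Hessian is $\frac{1}{Q}\bA^\top\bA$, which is positive semidefinite.

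Next I would handle $\mL_S^p$. The key sub-claim is that the per-class logits $z_t^{(i)}=\btheta_t^\top\bF^{(p,i)}$ are affine in $\bW$; granting this, rewrite the term for sample $i$ inside Eq.~(\ref{eq:compute_ls}) as $-z_{y^{(i)}}^{(i)}+\log\sum_{k=1}^{m}e^{z_k^{(i)}}$. The log-sum-exp function is convex in $(z_1^{(i)},\dots,z_m^{(i)})$ and the leading term is linear, so each summand is convex in the logit vector, hence convex in $\bW$ after composition with the affine logit map; averaging over the $N$ samples preserves convexity. Putting the pieces together, $\mL=\mL_M+\lambda\mL_S^p$ is a nonnegative combination of convex functions and is therefore convex in $\bW$, as claimed.

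The step that needs care — and the one I expect to be the main obstacle — is justifying that $\bW\mapsto\bF^{(p,i)}$, and hence the logits $z_t^{(i)}$, is affine. In $\bF^p(\bW)=\mathrm{AvgPooling}(\mathrm{ReLU}(\mathrm{BN}(\bO^p)))$, both the average pooling and (inference-time) batch normalization are affine, but the ReLU is not, so if $\bO^p$ genuinely varies with $\bW$ the logits are only piecewise affine and $\mL_S^p$ need not be convex on all of parameter space. The proposition is thus to be read for the linearized auxiliary head — equivalently, absorbing the ReLU/BN stage into a fixed feature transformation so that only an affine map remains between $\bW$ and the logits, which is exactly the regime relevant for the channel-selection subproblem solved here. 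I would state this modeling assumption explicitly up front, and under it the composition-with-affine argument above completes the proof.
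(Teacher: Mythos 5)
Your proof is correct and follows the same basic route as the paper's: both terms are standard convex losses precomposed with a map that is treated as affine in $\bW$, and a nonnegative combination of convex functions is convex. The differences are in execution and in candor. For the cross-entropy term the paper restricts to binary classification, computes the Hessian of the logistic loss via the chain rule, shows $\bz^{\trsp}\nabla_{\bW}^2\bigl[-\log h_{\btheta}(\bF^{(p,i)})\bigr]\bz = h_{\btheta}(\bF^{(p,i)})\bigl(1-h_{\btheta}(\bF^{(p,i)})\bigr)\bigl(\bz^{\trsp}\nabla_{\bW}\bF^{(p,i)}\btheta\bigr)^2\ge 0$, and asserts that the argument extends to the multiclass case; you instead write the multiclass softmax loss directly as a linear term plus log-sum-exp in the logits and invoke convexity of log-sum-exp under affine precomposition, which is cleaner and avoids the binary-to-multiclass handwave. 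More importantly, the decisive step in the paper's computation is the assertion $\nabla_{\bW}^2\bF^{(p,i)}=0$, i.e.\ the paper silently treats $\bF^{(p,i)}$ as affine in $\bW$ even though $\bF^p(\bW)=\mathrm{AvgPooling}(\mathrm{ReLU}(\mathrm{BN}(\bO^p)))$ contains a ReLU, so the map is only piecewise affine (and not twice differentiable everywhere); your proposal identifies exactly this point and turns the linearization into an explicit hypothesis, which is the honest statement of what both proofs actually establish. So your argument is sound under the assumption you state, and that assumption is precisely the one the paper's own proof uses implicitly.
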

	Last, the optimization problem for discrimination-aware channel pruning can be formulated as
	\begin{equation}
		\begin{array}{ll}
			\min_{\bW}~~\mL(\bW),~~~~\st~~ ||\bW||_{2,0}\le \kappa_l,
		\end{array}
		\label{eq:object_func}
	\end{equation}
	where $\kappa_l<c$ is the number channels to be selected. In our method, the sparsity of $\bW$ \kui{can be} either determined by a pre-defined pruning rate (\jing{S}ee Section~\ref{sec:problem_def}) or automatically adjusted by \zhuang{the stopping conditions in Section~\ref{sec:stop_conditions}.} \bohan{We explore both effects in Section~\ref{sec:experiment}.}
	
	
	\subsection{Discrimination-aware channel pruning}
	
	By introducing $P$ losses $\{\mL^p_S\}_{p=1}^P$ to intermediate layers, the proposed discrimination-aware channel pruning (DCP) method is shown in Algorithm~\ref{alg:pcp}. Starting from a pre-trained model, DCP  updates the model $M$ and performs channel pruning with $(P+1)$ stages.
	Algorithm~\ref{alg:pcp} is called discrimination-aware in the sense that an additional loss and the final loss are considered to fine-tune the model.
	Moreover, the additional loss will be used to select channels, as discussed below.
	In contrast to GoogLeNet~\cite{szegedy2015going} and DSN~\cite{lee2015deeply}, in Algorithm~\ref{alg:pcp}, we do not use all the losses at the same time. In fact, \jing{at} each stage we will consider two losses only,~\ie, $\mL_S^p$ and the final loss $\mL_f$.

	
	\begin{figure}[!ht]
		\begin{minipage}[c]{0.46\linewidth}
			\centering
			\begin{algorithm}[H]
				\caption{{\small{Discrimination}-aware channel pruning \jing{(DCP)}}}
				\begin{small}
					\begin{algorithmic}
						\STATE {\bfseries Input:} Pre-trained model $M$, training data $\{\bx_i, y_i\}_{i=1}^{N}$, and parameters \zhuang{$\{\kappa_l\}_{l=1}^L$}.
						\FOR{$p \in \{1, ..., P+1\}$}
						\STATE Construct loss $\mL^{p}_S$ to layer $L_p$ as in Figure~\ref{fig:network_architecture}.
						\STATE \kui{Learn $\btheta$} and \textbf{Fine-tune} $M$ with $\mL^{p}_S$ and $\mL_f$.
						\FOR{$l \in \{L_{p-1}+1,..., L_p\}$}
						\STATE Do \textbf{Channel Selection} for layer $l$ using Algorithm~\ref{alg:channel_selection}.
						\ENDFOR
						\ENDFOR
					\end{algorithmic}
				\end{small}
				\label{alg:pcp}
			\end{algorithm}
		\end{minipage}\hfill
		\begin{minipage}[c]{0.52\linewidth}
			\centering
			\begin{algorithm}[H]
				\caption{\small{Greedy \zhuang{algorithm} for channel selection}}
				\begin{small}
					\begin{algorithmic}
						\STATE {\bfseries Input:} Training data, model $M$, parameters $\kappa_l$, \kui{and} \zhuang{$\epsilon$}.
						\STATE {\bfseries Output:} Selected channel subset $\mA$ and model parameters $\bW_\mA$.
						\STATE \zhuang{Initialize $\mA \gets \emptyset$, and $t=0$.}%
						\WHILE{(stopping conditions are not achieved)}
						\STATE Compute gradients of $\mL$~\wrt~$\bW$: $\bG={\partial \mL}/{\partial \bW}$.
						\STATE Find the channel $k = \argmax_{j\notin \mA}\{ ||\bG_j||_F\}$.
						\STATE Let $\mA \gets \mA \cup \{ k \}$.
						\STATE Solve \zhuang{Problem}~(\ref{eq:optimize_l}) to update \zhuang{$\bW_\mA$}.
						\STATE Let $t\gets t+1$.
						\ENDWHILE
					\end{algorithmic}
					\label{alg:channel_selection}
				\end{small}
			\end{algorithm}
		\end{minipage}
	\end{figure}
	
	At each stage of Algorithm~\ref{alg:pcp}, for example, in the $p$-th stage, we first construct the additional loss $\mL^{p}_S$ and put them at layer $L_p$ (\jing{S}ee Figure~\ref{fig:network_architecture}).
	\kui{After that, we learn the model parameters $\btheta$ w.r.t. $\mL^{p}_S$ and fine-tune the model $M$ at the same time with both the additional loss $\mL^{p}_S$ and the final loss $\mL_f$.
	}
	In the fine-tuning, all the parameters in $M$ will be updated.\footnote{The details of fine-tuning \zhuang{algorithm} is put in Section~\ref{sec:finetune_algorithm} in the supplementary \jing{material}. } 
	\zhuang{Here, with the fine-tuning, the parameters regarding the additional loss can be well learned. Besides, fine-tuning is essential to compensate the accuracy loss from the previous pruning to suppress the accumulative error.} After fine-tuning with $\mL^{p}_S$ and $\mL_f$, the discriminative power of layers $l\in \{L_{p-1}+1, ..., L_{p}\}$ can be significantly improved. Then, we can perform channel selection for the layers in $\{L_{p-1}+1, ..., L_{p}\}$.
	

	
	\subsection{Greedy \zhuang{algorithm} for channel selection}
	\label{sec:learning}
	Due to the $\ell_{2,0}$-norm constraint, directly optimizing
	Problem~(\ref{eq:object_func}) is very difficult.
	To address this issue, following general greedy methods in\liu{~\cite{liu2014forward,bahmani2013greedy,yuan2014gradient,tan2014towards,tan2015matching}}, we propose a greedy algorithm to solve Problem~(\ref{eq:object_func}).
	\guo{To be specific, we first remove all the channels and then select those channels that really contribute to the discriminative power of the deep networks.
	}
	Let $\mA\subset \{1,\dots, c\}$ be the index set of the selected channels, where $\mA$ is empty at the beginning.
	\guo{As shown in Algorithm ~\ref{alg:channel_selection}, the channel selection method can be implemented in two steps.}
	First, we select the most important channels of input feature maps. At each iteration, we  compute the gradients \zhuang{$\bG_j={\partial \mL}/{\partial \bW_j}$}, where \zhuang{$\bW_j$ denotes} the parameters for the $j$-th input channel. We choose the channel \zhuang{$k = \argmax_{j\notin \mA}\{ ||\bG_j||_F\}$} as an active channel and put $k$ into $\mA$. Second, once $\mA$ is determined,we optimize $\bW$~\wrt~the selected channels by minimizing the following problem:
	\begin{equation}\label{eq:optimize_l}
		\begin{array}{ll}
			\min_{\bW}~~\mL(\bW), ~~\st~~\bW_{\mA^c} = \0,
		\end{array}
	\end{equation}
	where $\bW_{\mA^c}$ denotes the submatrix indexed by $\mA^c$ which is the complementary set of $\mA$. \zhuang{Here, we apply \jing{stochastic gradient descent (SGD)} to address the problem in Eq.~(\ref{eq:optimize_l}), and update $\bW_\mA$ by
		\begin{equation}\label{eq:update_w}
			\begin{array}{ll}
				\bW_\mA\gets \bW_\mA - \gamma \frac{\partial\mL}{\partial \bW_\mA},
			\end{array}
		\end{equation}
		where $\bW_\mA$ denotes the submatrix indexed by $\mA$, and $\gamma$ denotes the learning rate.}
	
	Note that when optimizing \zhuang{Problem~(\ref{eq:optimize_l})}, $\bW_\mA$ is warm-started from the fine-tuned model $M$. As a result, the optimization can be completed very quickly. Moreover, since we only consider the model parameter $\bW$ for one layer, we do not need to consider all data to do the optimization. To make a trade-off between the efficiency and performance, we sample a subset of images randomly from the training data for optimization.\zhuang{\footnote{We study the effect of the number of samples in Section~\ref{sec:explore_num_samples} in the supplementary material.}}
	Last, since we use SGD to update $\bW_\mA$, the learning rate $\gamma$ should be carefully adjusted to achieve an accurate solution.
	Then, the following stopping \zhuang{conditions} can be applied, which will help to determine the number of channels to be selected.

	\subsection{Stopping conditions} \label{sec:sparsity}
	\label{sec:stop_conditions}
	Given a predefined parameter $\kappa_l$ in problem \jing{(\ref{eq:object_func})}, 	Algorithm~\ref{alg:channel_selection} will be stopped if $||\bW||_{2,0}\jing{>} \kappa_l$. However, in practice, the parameter $\kappa_l$ is hard to be determined. Since $\mL$ is convex, $\mL(\bW^t)$ will monotonically decrease with iteration index $t$ in Algorithm \ref{alg:channel_selection}. We can therefore adopt the following stopping condition:
	\begin{equation}\label{eq:stop_con_outer}
		\begin{array}{ll}
			|\mL(\bW^{t-1}) - \mL(\bW^t)|/{\mL(\bW^0)}\leq
			\epsilon,
		\end{array}
	\end{equation}
	where $\epsilon$ is a tolerance value. If the above condition is achieved, the algorithm is stopped, and the number of selected channels will be automatically determined, i.e., $||\bW^t||_{2,0}$. An empirical study over the tolerance value $\epsilon$  is put in Section~\ref{sec:effect_eps}.

	\section{Experiments} \label{sec:experiment}
	In this section, we empirically evaluate the performance of DCP. Several state-of-the-art methods are adopted as the baselines, including ThiNet~\cite{luo2017thinet}, Channel pruning (CP)~\cite{he2017channel} and Slimming~\cite{liu2017learning}. \zhuang{Besides, to investigate the effectiveness of the proposed method, we include the following methods for study:} \textbf{DCP:}  DCP with a pre-defined pruning rate $\eta$.
	\textbf{DCP-Adapt:} We prune each layer with the \bohan{stopping} conditions in Section~\ref{sec:stop_conditions}.
	\textbf{WM:} We shrink the width of a network by a fixed ratio and train it from scratch, which is known as width-multiplier~\cite{howard2017mobilenets}.
	\textbf{WM+:} Based on WM, we \zhuang{evenly insert additional losses to the network and train it from scratch.}
	\textbf{Random DCP:} Relying on DCP, we randomly choose channels instead of using \jing{gradient-based strategy in} Algorithm~\ref{alg:channel_selection}.
	
	\textbf{Datasets.}
	We evaluate the performance of various methods on three datasets, including CIFAR-10~\cite{krizhevsky2009learning}, ILSVRC-12~\cite{deng2009imagenet}, and LFW~\cite{huang2007labeled}. CIFAR-10 consists of 50k training samples and 10k testing images with 10 classes. ILSVRC-12 contains 1.28 million training samples and 50k testing images for 1000 classes. LFW~\cite{huang2007labeled} contains 13,233 face images from 5,749 identities.
	
	
	\subsection{Implementation details}
	We implement the proposed method on PyTorch~\cite{paszke2017pytorch}. 
	Based on the pre-trained \zhuang{model}, we apply our method to select the informative channels. In practice, we decide the number of additional losses according to the depth of the network
	\guo{(See Section~\ref{sec:explore_num_losses} in the supplementary material).}
	Specifically, we insert 3 losses to ResNet-50 and ResNet-56, and 2 additional losses to VGGNet and ResNet-18.
	
	We fine-tune the whole network with selected channels only. 
	We use SGD with nesterov~\cite{nesterov1983method} for the optimization. The momentum and weight decay are set to 0.9 and 0.0001, respectively. We set $\lambda$ to 1.0 in our experiments by default. 
	On CIFAR-10, \bohan{we fine-tune 400 epochs using a mini-batch size of 128. The learning rate is initialized to 0.1 and divided by 10 at epoch 160 and 240. On ILSVRC-12, we fine-tune the \zhuang{network} for 60 epochs with a mini-batch size of 256. The learning rate is started at 0.01 and divided by 10 at epoch 36, 48 and 54, respectively. \liu{The source code of our method can be found at \url{https://github.com/SCUT-AILab/DCP}.}}
	
	\subsection{Comparisons on CIFAR-10}
	\zhuang{We first prune ResNet-56 and VGGNet on CIFAR-10.} The  comparisons with several state-of-the-art methods are reported in Table~\ref{table:comparision_cifar10}. From the results, our method achieves the best performance under the same acceleration rate compared with the previous state-of-the-art. 
	\zhuang{Moreover, with DCP-Adapt, our pruned VGGNet
		outperforms the pre-trained model by \textbf{0.58\%} in testing error, and obtains \textbf{15.58}$\times$ reduction in model size. Compared with random \jing{DCP}, our proposed DCP reduces the performance degradation of VGGNet by 0.31\%, which implies the effectiveness of the proposed channel selection strategy. Besides, we also observe that the inserted additional losses can bring performance gain to the networks. With additional losses, \textit{WM+} of VGGNet outperforms \textit{WM} by 0.27\% in testing error. Nevertheless, our method shows much better performance than \zhuang{\textit{WM+}}. For example, our pruned VGGNet with DCP-Adapt outperforms \zhuang{\textit{WM+}} by \textbf{0.69}\% in testing error.}
	
	\begin{table}[!t]
		\centering
		\caption{Comparisons on CIFAR-10. "-" denotes that the results are not reported.}
		\scalebox{0.77}{
			\begin{tabular}{c|c||cccccccc}
				\hline
				\multicolumn{2}{c||}{Model} & \tabincell{c}{ThiNet \\ \cite{luo2017thinet}} & \tabincell{c}{\jing{CP} \\ \cite{he2017channel}} & \tabincell{c}{Sliming \\ \cite{liu2017learning}} & WM & WM+ & \tabincell{c}{Random \\ \jing{DCP}} & DCP & DCP-Adapt \\ \hline \hline
				\multirow{3}{*}{\tabincell{c}{VGGNet \\ (Baseline 6.01\%)}}
				& \#Param. $\downarrow$ & $1.92\times$ & $1.92\times$ & $8.71\times$ & $1.92\times$ & $1.92\times$ & $1.92\times$ & $1.92\times$ & \textbf{15.58}$\times$ \\
				& \#FLOPs $\downarrow$ & $2.00\times$ & $2.00\times$ & $2.04\times$ & $2.00\times$ & $2.00\times$ & $2.00\times$ & $2.00\times$ & \textbf{2.86}$\times$ \\
				& \tabincell{c}{Err. gap (\%)} & +0.14 & +0.32 & +0.19 &  +0.38 & +0.11 & +0.14 & \textbf{-0.17} & \textbf{-0.58} \\\hline
				
				\multirow{3}{*}{\tabincell{c}{ResNet-56 \\ (Baseline 6.20\%)}}
				& \#Param. $\downarrow$ & $1.97 \times$ & -  & - & $1.97 \times$ & $1.97\times$ & $1.97\times$ & $1.97\times$ & \textbf{3.37}$\times$ \\
				& \#FLOPs $\downarrow$ & $1.99 \times$ & $2 \times$ & - & $1.99 \times$ & $1.99\times$ & $1.99\times$ & $1.99\times$ & 1.89$\times$ \\
				& \tabincell{c}{Err. gap (\%)} & +0.82 & +1.0 & - & +0.56 & +0.45 & +0.63 & \textbf{+0.31} & \textbf{-0.01}
				\\ \hline
		\end{tabular}}
		\label{table:comparision_cifar10}
	\end{table}

	\textbf{{Pruning MobileNet v1 \jing{and MobileNet v2} on CIFAR-10.}}
	\zhuang{We apply DCP to prune recently developed compact architectures,~\eg, MobileNet v1 \jing{and MobileNet v2} , and evaluate the performance on CIFAR-10. We report the results in Table~\ref{table:mobilenet_v1_cifar10}. With additional losses, \textit{WM+} of MobileNet outperforms \textit{WM} by 0.26\% in testing error. However, our pruned models achieve 0.41\% improvement \jing{over  MobileNet v1} and 0.22\% improvement over  MobileNet v2 in testing error. Note that the \textbf{Random DCP} incurs performance degradation on both MobileNet v1 and MobileNet v2 by 0.30\% and 0.57\%, respectively.  }
	
	
	\begin{table}[!ht]
		\centering
		\caption{Performance of pruning 30\% channels of MobileNet v1 \jing{and MobileNet v2} on CIFAR-10.}
		\scalebox{.77}{
			\begin{tabular}{c|c||cccc}
				\hline
				\multicolumn{2}{c||}{Model} & WM & WM+ & \tabincell{c}{Random \\ \jing{DCP}} & DCP \\ \hline\hline
				\multirow{3}{*}{\tabincell{c}{MobileNet v1 \\ (Baseline 6.04\%)}}
				& \#Param. $\downarrow$ & $1.43\times$ & $1.43\times$ & $1.43\times$ & $1.43\times$ \\
				& \#FLOPs $\downarrow$ & $1.75\times$ & $1.75\times$ & $1.75\times$ & $1.75\times$ \\
				& \tabincell{c}{Err. gap (\%)} & +0.48 & +0.22 & +0.30 & \textbf{-0.41} \\
				\hline
				\multirow{3}{*}{\tabincell{c}{MobileNet v2 \\ (Baseline 5.53\%)}}
				& \#Param. $\downarrow$ & $1.31\times$ & $1.31\times$ & $1.31\times$ & $1.31\times$ \\
				& \#FLOPs $\downarrow$ & $1.36\times$ & $1.36\times$ & $1.36\times$ & $1.36\times$ \\
				& \tabincell{c}{Err. gap (\%)} & +0.45 & +0.40 & +0.57 & \textbf{-0.22} \\
				\hline
		\end{tabular}}
		\label{table:mobilenet_v1_cifar10}
	\end{table}
	
	\subsection{Comparisons on ILSVRC-12}
	To verify the effectiveness of the proposed method on large-scale datasets, we further apply our method on ResNet-50 to achieve $2\times$ acceleration on ILSVRC-12.
	\zhuang{We report the single view evaluation in Table~\ref{table:comparision_resnet50}}.
	Our method outperforms ThiNet~\cite{luo2017thinet} by \textbf{0.81\%} and \textbf{0.51\%} in top-1 and top-5 error, respectively. Compared with channel pruning~\cite{he2017channel}, our pruned model achieves 0.79\% improvement in top-5 error. Compared with \zhuang{\textit{WM+}}, which leads to 2.41\% increase \bohan{in} top-1 error, our method only results in \textbf{1.06\%} degradation in top-1 error.

	\begin{table}[!ht]
		\centering
		\caption{\zhuang{Comparisons on ILSVRC-12. The \textbf{top-1 and top-5 error (\%)} of the pre-trained model are \textbf{23.99 and 7.07}, respectively. "-" denotes that the results are not reported.}}
		\scalebox{0.9}{
			\begin{tabular}{c|c||cccccc}
				\hline
				\multicolumn{2}{c||}{Model} & ThiNet~\cite{luo2017thinet} & \jing{CP}~\cite{he2017channel} & WM & WM+ & DCP \\ \hline\hline
				\multirow{4}{*}{ResNet-50} & \#Param. $\downarrow$ & $2.06\times$ & - & $2.06\times$ & $2.06 \times$  & $2.06\times$ \\
				& \#FLOPs $\downarrow$ & $2.25 \times$ & $2 \times$ & $2.25\times$ & $2.25\times$ & $2.25\times$ \\
				& Top-1 gap (\%) & +1.87 & - & +2.81 & +2.41 & \textbf{+1.06} \\
				& Top-5 gap (\%) & +1.12 & +1.40 & +1.62 & +1.28 & \textbf{+0.61} \\
				\hline
		\end{tabular}}
		\label{table:comparision_resnet50}
	\end{table}

	\subsection{Experiments on LFW}
	\label{sec:experiment_lfw}
	We \bohan{further} conduct experiments on LFW~\cite{huang2007labeled}, which is a standard benchmark dataset for face recognition. We use CASIA-WebFace~\cite{yi2014learning} \kui{(}which consists of 494,414 face images from 10,575 individuals\kui{)} for training. With the same settings in~\cite{liu2017sphereface}, we first train SphereNet-4 \kui{(}which contains 4 convolutional layers\kui{)} from scratch. And Then, we adopt our method to compress the pre-trained SphereNet model. Since the fully connected layer occupies 87.65\% parameters of the model, we also prune the fully connected layer to reduce the model size.
	
	\begin{table*}[!ht]
		\centering
		\caption{Comparisons of prediction accuracy, \#Param. and \#FLOPs on LFW. We report the ten-fold cross validation accuracy of different models.}
		\vskip 0.1in
		\scalebox{0.85}{
			\begin{tabular}{c||cccccc}
				\hline
				Method & FaceNet~\cite{Schroff_2015_CVPR} & DeepFace~\cite{Taigman_2014_CVPR} & VGG~\cite{parkhi2015deep} & SphereNet-4~\cite{liu2017sphereface} & \tabincell{c}{DCP \\ (prune 50\%)} & \tabincell{c}{DCP \\ (prune 65\%)} \\ \hline \hline
				\#Param. & 140M & 120M & 133M & 12.56M & 5.89M & 4.06M \\
				\#FLOPs & 1.6B & 19.3B & 11.3B & 164.61M & 45.15M & 24.16M \\
				LFW acc. (\%)& 99.63 & 97.35 & 99.13 & 98.20 & 98.30 & 98.02 \\
				\hline
		\end{tabular}}
		\label{table:prune_spherenet_lfw}
	\end{table*}
	We report \bohan{the} results in Table~\ref{table:prune_spherenet_lfw}. With the pruning rate of 50\%, our method \zhuang{speeds up} SphereNet-4 for \textbf{3.66$\times$} with \textbf{0.1\%} improvement in ten-fold validation accuracy. Compared with huge networks,~\eg, FaceNet~\cite{Schroff_2015_CVPR}, DeepFace~\cite{Taigman_2014_CVPR}, and VGG~\cite{parkhi2015deep}, our pruned model achieves comparable performance but has only \textbf{45.15M} FLOPs and \textbf{5.89M} parameters, which is sufficient to be deployed on embedded systems. Furthermore, pruning 65\% channels in SphereNet-4 results in a more compact model, which requires only 24.16M FLOPs with the accuracy of 98.02\% on LFW.
	
	
	\section{Ablation studies}
	
	\subsection{Performance with different pruning rates}
	To study the effect of using different pruning rates \kui{$\eta$}, we prune 30\%, 50\%, and 70\% channels of ResNet-18 and ResNet-50, and evaluate the pruned models on ILSVRC-12. Experimental results are shown in Table~\ref{table:results_imagenet_resnet18}. 
	Here, \kui{we only report the performance under different pruning rates, while the detailed \zhuang{model complexity} comparisons are provided in Section~\ref{sec:model_complexity} in the supplementary material.}
	
	From Table~\ref{table:results_imagenet_resnet18}, in general, performance of the pruned models goes worse with the increase of pruning rate. However, our pruned ResNet-50 with pruning rate of 30\% outperforms the pre-trained model, with \textbf{0.39\%} and \textbf{0.14\%} reduction in top-1 and top-5 error, respectively.
	Besides, the performance degradation of ResNet-50 is smaller than that of ResNet-18 with the same pruning rate. \kui{For example,  when pruning 50\% of the channels, while it only leads to 1.06\% increase in top-1 error for ResNet-50,  it results in 2.29\% increase of top-1 error for ResNet-18. One possible reason is that, compared to ResNet-18, ResNet-50 is more redundant with more parameters, thus it is easier to be pruned.}

	\begin{small}
		\begin{table}[!ht]
			\begin{minipage}[t]{0.5\linewidth}
				\centering
				\caption{Comparisons on ResNet-18 and ResNet-50 with different pruning rates. We report the top-1 and top-5 error (\%) on ILSVRC-12.}
				\begin{tabular}{c||cc}
					\hline
					Network & \zhuang{$\eta$} & Top-1/Top5 err. \\ \hline \hline
					\multirow{4}{*}{ResNet-18} & 0\% (baseline) & 30.36/11.02 \\
					& 30\% & \textbf{30.79/11.14} \\
					& 50\% & 32.65/12.40 \\
					& 70\% & 35.88/14.32 \\ \hline
					\multirow{4}{*}{ResNet-50} & 0\% (baseline) & 23.99/7.07 \\
					& 30\% & \textbf{23.60/6.93} \\
					& 50\% & 25.05/7.68 \\
					& 70\% & 27.25/8.87 \\ \hline
				\end{tabular}
				\label{table:results_imagenet_resnet18}
			\end{minipage}\hfill	
			\begin{minipage}[t]{0.48\linewidth}
				\centering
				\caption{Pruning results on ResNet-56 with different $\lambda$ on CIFAR-10.}
				\vskip 0.03in
				\begin{tabular}{l||cc}
					\hline
					$\lambda$ & Training err. & Testing err. \\ \hline\hline
					0 ($\mL_M$ only) & 7.96 & 12.24 \\
					0.001 & 7.61 & 11.89 \\
					0.005 & 6.86 & 11.24 \\
					0.01 & 6.36 & 11.00 \\
					0.05 & 4.18 & 9.74 \\
					0.1 & 3.43 & 8.87 \\
					0.5 & 2.17 & 8.11 \\
					1.0 & \textbf{2.10} & \textbf{7.84} \\
					1.0 ($\mL_S$ only) & 2.82 & 8.28 \\
					\hline
				\end{tabular}
				\label{table:explore_lambda_resnet56}
			\end{minipage}
		\end{table}
	\end{small}
	%
	
	\subsection{Effect of the trade-off parameter $\lambda$}
	\label{sec:effect_of_lambda}
	We prune 30\% channels of ResNet-56 on CIFAR-10 with different $\lambda$.
	We report the training error and testing error without fine-tuning in Table~\ref{table:explore_lambda_resnet56}. From the table, the performance of the pruned model improves with increasing $\lambda$. \kui{Here, a larger $\lambda$} implies that we put more emphasis on the \kui{additional} loss (See Equation (\ref{eq:compute_loss})). This demonstrates the effectiveness of discrimination-aware strategy for channel selection. It is worth \jing{mentioning} that both \jing{the} reconstruction error and \jing{the} cross-entropy loss contribute to better performance of the pruned model, which strongly supports the motivation to select the important channels by $\mL_S$ and $\mL_M$. After all, as the network achieves the best result when $\lambda$ is set to 1.0, we use this value to initialize $\lambda$ in our experiments.
	
	\subsection{Effect of the stopping condition}\label{sec:effect_eps}
	\kui{To explore the effect of stopping condition discussed in Section \ref{sec:sparsity}, we test different tolerance value $\epsilon$ in the condition.} Here, we prune VGGNet on CIFAR-10 with $\epsilon\in\{0.1, 0.01, 0.001\}$. Experimental results are shown in Table~\ref{table:explore_eps}. \kui{In general, a smaller $\epsilon$ will lead to more rigorous stopping condition and hence more channels will be selected.} As a result, \kui{the performance of the pruned model is improved with the decrease of $\epsilon$. This experiment demonstrates the usefulness and effectiveness of the stopping condition for automatically determining the pruning rate.}
	
	
	\begin{table}[!ht]
		\centering
		\caption{Effect of $\epsilon$ for channel selection. \jing{We prune VGGNet and report the testing error on CIFAR-10. }The testing error of baseline VGGNet is 6.01\%.}
		\scalebox{0.85}{
			\begin{tabular}{cc||ccc}
				\hline
				Loss & $\epsilon$ & Testing err. (\%) & \#Param. $\downarrow$ & \#FLOPs $\downarrow$ \\ \hline\hline
				\multirow{3}{*}{$\mL$}
				& 0.1 & 12.68 & \textbf{152.25$\times$} & 27.39$\times$ \\
				& 0.01 & 6.63 & 31.28$\times$ & 5.35$\times$ \\
				& 0.001 & \textbf{5.43} & 15.58$\times$ & 2.86$\times$ \\
				\hline
		\end{tabular}}
		\label{table:explore_eps}
	\end{table}
	
	\subsection{Visualization of feature maps}\label{sec:visualize_feature}
	We visualize the feature maps~\wrt~the pruned/selected channels of the first \zhuang{block} (\ie, res-2a) in ResNet-18 in Figure~\ref{fig:visualization_feature}. From the results, we observe that feature maps of the pruned channels (\jing{S}ee Figure~\ref{fig:visualization_feature}(b)) \zhuang{are less informative compared} to those of the selected ones (\jing{S}ee Figure~\ref{fig:visualization_feature}(c)). \bohan{It proves} that the proposed DCP selects the channels with strong discriminative power for the network. \kui{M}ore visualization results \kui{can be found} in Section~\ref{sec:more_visualization} in the supplementary material.
	
	\begin{figure}[!ht]
		\centering
		\includegraphics[width=.95\columnwidth]{./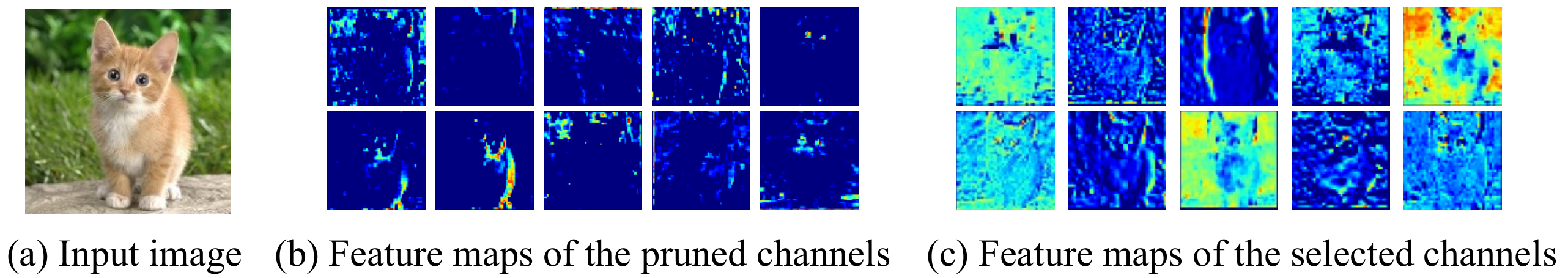}
		\caption{Visualization of the feature maps \zhuang{of} the pruned/selected channels of res-2a in ResNet-18.}
		\label{fig:visualization_feature}
	\end{figure}

	\section{Conclusion}
	In this paper, we have proposed a discrimination-aware channel pruning method for the compression of deep neural networks. We \kui{formulate the channel pruning/selection problem as a sparsity-induced optimization problem by considering both reconstruction error and channel discrimination power}. \jing{Moreover}, we propose a greedy algorithm to solve the optimization problem. Experimental results on benchmark datasets show that the proposed method outperforms several state-of-the-art methods \bohan{by a large margin} with the same pruning rate. \kui{Our DCP method provides an effective way to obtain more compact networks. For those compact network designs such as MobileNet v1\&v2, DCP can still improve their performance by removing redundant channels. In particular for MobileNet v2, DCP improves it by reducing 30\% of channels on CIFAR-10.} In the future, we will incorporate the computational cost per layer into the optimization, and combine our method with other model compression strategies (such as quantization) to further reduce the model size and inference cost.
	
	\section*{Acknowledgements}
	This work was supported by National Natural Science Foundation of China (NSFC) (61876208, 61502177 and 61602185),  Recruitment Program for Young Professionals,  Guangdong Provincial Scientific and Technological funds (2017B090901008, 2017A010101011, 2017B090910005), Fundamental Research Funds for the Central Universities D2172480, Pearl River S\&T Nova Program of Guangzhou 201806010081, CCF-Tencent Open Research Fund RAGR20170105, and Program for Guangdong Introducing Innovative and Enterpreneurial Teams 2017ZT07X183.
	
	
	\bibliographystyle{abbrv}
	{
		\small

	}

	\newpage
	
	\begin{center}
		{
			\Large{\textbf{Supplementary Material: \jing{Discrimination}-aware Channel Pruning for Deep Neural Networks}}
		}
	\end{center}

	We organize our supplementary material as follows. In Section~\ref{sec:theoretical_analysis}, we give some theoretical analysis on the loss function. In Section~\ref{sec:finetune_algorithm}, we introduce the details of fine-tuning algorithm in DCP. Then, in Section~\ref{sec:pruning_single_layer}, we discuss the effect of pruning each individual block in ResNet-18. \jing{We explore the number of additional losses in Section~\ref{sec:explore_num_losses}}. We explore the effect of the number of samples on channel selection in Section~\ref{sec:explore_num_samples}. \jing{We study the influence on the quality of pre-trained models in Section~\ref{sec:effect_pre_trained_models}}.  In Section~\ref{sec:mobilenet_on_ilsvrc12}, we apply our method to prune MobileNet v1 \jing{and MobileNet v2} on ILSVRC-12. We discuss the model complexities of the pruned models in Section~\ref{sec:model_complexity}, and report the detailed structure of the pruned VGGNet with DCP-Adapt in Section~\ref{sec:vgg_structure}. We provide more visualization results of the feature maps w.r.t. the pruned/selected channels in Section~\ref{sec:more_visualization}.
	\section{Convexity of the loss function}
	\label{sec:theoretical_analysis}
	In this section, we analyze the property of the loss function.
	\begin{prop}\textbf{\emph{(Convexity of the loss function) }} \label{prop: loss}
		Let $\bW$ be the model parameters of a considered layer. Given \zhuang{the mean square loss and the cross-entropy loss} defined in Eqs. (\ref{eq:compute_ls}) and (\ref{eq:compute_lmse}), then the joint loss function $ \mL (\bW) $ is convex w.r.t. $ \bW $.
	\end{prop}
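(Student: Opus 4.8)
The plan is to lean on two elementary closure properties of convex functions: a nonnegative linear combination of convex functions is convex, and the composition of a convex function with an affine map is convex. Since $\mL(\bW) = \mL_M(\bW) + \lambda\,\mL_S^p(\bW)$ with $\lambda > 0$, it then suffices to show that $\mL_M$ and $\mL_S^p$ are each convex in $\bW$, where $\bW$ denotes the parameters of the single layer currently being optimized and all other parameters (in particular the auxiliary classifier weights $\btheta$) are held fixed.

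For $\mL_M$ I would first read off from Eq.~(\ref{eq:compute_z_hat}) that each output feature map $\bO_{i,j,:,:} = \sum_{k} \bX_{i,k,:,:} * \bW_{j,k,:,:}$ is a \emph{linear} function of $\bW$, since convolution is linear in the filter. Hence $\bO^b_{i,j,:,:} - \bO_{i,j,:,:}$ is affine in $\bW$, the map $\bV \mapsto \|\bV\|_F^2$ is convex, and their composition is convex; summing over $i,j$ and scaling by $1/(2Q) > 0$ in Eq.~(\ref{eq:compute_lmse}) preserves convexity. So $\mL_M$ is convex (in fact a convex quadratic) in $\bW$.

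For $\mL_S^p$ I would track how the pre-softmax logits $z^{(i)}_t := \btheta_t^\top \bF^{(p,i)}$ depend on $\bW$. As above $\bO^p$ is linear in $\bW$; batch normalization acts here, with fixed statistics, as a per-channel affine rescaling, and the average pooling in Eq.~(\ref{eq:bar_z}) is linear, so composing these the feature vector $\bF^{(p,i)}$, and hence each logit $z^{(i)}_t$ (as $\btheta$ is fixed), is an affine function of $\bW$. Rewriting Eq.~(\ref{eq:compute_ls}) as $\mL_S^p(\bW) = \frac{1}{N}\sum_i \big( \log\sum_{k} e^{z^{(i)}_k} - \sum_t I\{y^{(i)}=t\}\, z^{(i)}_t \big)$, the log-sum-exp term is convex in the vector $z^{(i)}$ and the remaining term is linear in $z^{(i)}$, so each summand is convex in $z^{(i)}$; composing with the affine map $\bW \mapsto z^{(i)}$ keeps it convex, and averaging over $i$ preserves convexity. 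Combining the two parts, $\mL = \mL_M + \lambda\,\mL_S^p$ is convex in $\bW$.

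The step that needs the most care is the assertion that $\bF^{(p,i)}$ depends affinely on $\bW$: the ReLU preceding the pooling in Eq.~(\ref{eq:bar_z}) is nonlinear, so in full generality one only gets that each coordinate of $\bF^{(p,i)}$ is convex in $\bW$, and then convexity of the logits would additionally require the classifier weights $\btheta$ to be sign-constrained. The way I would resolve this is to work, consistently with how the auxiliary branch is constructed, in the regime where that branch is an affine read-out of $\bO^p$ (batch normalization as affine rescaling, pooling as averaging), so that the only genuinely nonlinear ingredient — the softmax cross-entropy — is applied to an affine function of $\bW$, and the convexity of log-sum-exp does all the work. Once this affine-dependence step is granted, the rest is the routine application of the two closure properties stated at the outset.
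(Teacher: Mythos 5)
Your proof is correct, and it reaches the conclusion by a somewhat different route than the paper. The paper works with the binary (sigmoid) cross-entropy, computes the gradient and Hessian of $-\log h_{\btheta}(\bF^{(p,i)})$ with respect to $\bW$ by the chain rule, and shows the Hessian is positive semi-definite of the form $(\nabla_{\bW}\bF^{(p,i)})\,h(1-h)\,\btheta\btheta^{\trsp}(\nabla_{\bW}\bF^{(p,i)})^{\trsp}$, remarking that the multi-class case "can be extended"; you instead invoke the standard closure properties directly — $\mL_M$ in Eq.~(\ref{eq:compute_lmse}) is a convex quadratic since Eq.~(\ref{eq:compute_z_hat}) is linear in $\bW$, and $\mL_S^p$ in Eq.~(\ref{eq:compute_ls}) is log-sum-exp minus a linear term composed with an affine map of $\bW$ — which handles the full multi-class softmax at once and is shorter. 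Importantly, both arguments hinge on exactly the same assumption: the paper's Hessian computation silently uses $\nabla_{\bW}^2\bF^{(p,i)}=0$, i.e., that the feature $\bF^{(p,i)}$ of Eq.~(\ref{eq:bar_z}) is affine in $\bW$, which is precisely the step you flag as delicate because of the ReLU (and BN statistics) in that branch. So your treatment is not weaker than the paper's — it is in fact more explicit about the regime (fixed activation pattern / affine read-out) in which the claimed convexity holds, whereas the paper encodes that assumption implicitly; what the paper's elementary Hessian calculation buys is self-containedness (no appeal to log-sum-exp convexity as a known fact), at the cost of covering only the binary case explicitly.
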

	\begin{proof}
		The mean square loss (\ref{eq:compute_lmse}) w.r.t. $ \bW $ is convex because $ \bO_{i, j, :, :} $ is linear w.r.t. $ \bW $.
		Without loss of generality, we consider the cross-entropy of binary classification, it can be extend to multi-classification, \textit{i.e.},
		\begin{align*}
		\mL_{S}^p (\bW) = \sum_{i=1}^{N}  y^i \left[ - \log \left( h_{\btheta} \left(\bF^{(p, i)}\right) \right) \right] + (1-y^i) \left[ -\log \left( 1 - h_{\btheta} \left(\bF^{(p, i)}\right) \right) \right],
		\end{align*}
		where $ h_{\btheta} \left(\bF^{(p, i)}\right) = \frac{1}{1+e^{-\btheta^{\trsp} \bF^{(p, i)}}} $, $ \bF^p (\bW) = \emph{AvgPooling} (\emph{ReLU}(\emph{BN} (\bO^p))) $ and $ \bO_{i, j, :, :}^p$ is linear w.r.t. $\bW$. Here, we assume $ \bF^{(p, i)} $ and $ \bW $ are vectors.
		The loss function $ \mL_{S}^p (\bW) $ is convex w.r.t. $ \bW $ as long as $ - \log \left( h_{\btheta} \left(\bF^{(p, i)}\right) \right) $ and  $ -\log \left( 1 - h_{\btheta} \left(\bF^{(p, i)}\right) \right) $ are convex w.r.t. $ \bW $. First, we calculate the derivative of the former, we have
		\begin{align*}
		\nabla_{\bF^{(p, i)}} \left[ - \log \left( h_{\btheta} \left(\bF^{(p, i)}\right) \right) \right] = \nabla_{\bF^{(p, i)}} \left[ \log \left( 1 + e^{-\btheta^{\trsp} \bF^{(p, i)}} \right) \right] = \left( h_{\btheta} \left(\bF^{(p, i)}\right) - 1 \right) \btheta
		\end{align*}
		and
		\begin{align}
		\nabla_{\bF^{(p, i)}}^2 \left[ - \log \left( h_{\btheta} \left(\bF^{(p, i)}\right) \right) \right]
		=& \nabla_{\bF^{(p, i)}} \left[ \left( h_{\btheta} \left(\bF^{(p, i)}\right) - 1 \right) \btheta \right] \nonumber \\
		=& h_{\btheta} \left(\bF^{(p, i)}\right) \left( 1 - h_{\btheta} \left(\bF^{(p, i)}\right) \right) \btheta \btheta^{\trsp}. \label{eqn: 1}
		\end{align}
		Using chain rule, the derivative w.r.t. $ \bW $ is
		\begin{align*}
		\nabla_{\bW} \left[ - \log \left( h_{\btheta} \left(\bF^{(p, i)}\right) \right) \right]
		=& \left( \nabla_{\bW}\bF^{(p, i)} \right)  \nabla_{\bF^{(p, i)}} \left[ - \log \left( h_{\btheta} \left(\bF^{(p, i)}\right) \right) \right] \\
		=& \left( \nabla_{\bW}\bF^{(p, i)} \right)  \left( h_{\btheta} \left(\bF^{(p, i)}\right) - 1 \right) \btheta,
		\end{align*}
		then the hessian matrix is
		\begin{align*}
		&\nabla_{\bW}^2 \left[ - \log \left( h_{\btheta} \left(\bF^{(p, i)}\right) \right) \right] \\
		=& \nabla_{\bW} \left[ \left( \nabla_{\bW}\bF^{(p, i)} \right)  \left( h_{\btheta} \left(\bF^{(p, i)}\right) - 1 \right) \btheta \right] \\
		=& \nabla_{\bW}^2 \bF^{(p, i)} \left( h_{\btheta} \left(\bF^{(p, i)}\right) - 1 \right) \btheta
		+\nabla_{\bW} \left[ \left( h_{\btheta} \left(\bF^{(p, i)}\right) - 1 \right)\btheta \right] \left( \nabla_{\bW} \bF^{(p, i)} \right)^{\trsp} \\
		=& \nabla_{\bW} \left[ \left( h_{\btheta} \left(\bF^{(p, i)}\right) - 1 \right)\btheta \right] \left( \nabla_{\bW} \bF^{(p, i)} \right)^{\trsp} \\
		=& \left( \nabla_{\bW} \bF^{(p, i)} \right) \nabla_{\bF^{(p, i)}}  \left[ \left( h_{\btheta} \left(\bF^{(p, i)}\right) - 1 \right)\btheta \right] \left( \nabla_{\bW} \bF^{(p, i)} \right)^{\trsp} \\
		=& \left( \nabla_{\bW} \bF^{(p, i)} \right) h_{\btheta} \left(\bF^{(p, i)}\right) \left( 1 - h_{\btheta} \left(\bF^{(p, i)}\right) \right) \btheta \btheta^{\trsp} \left( \nabla_{\bW} \bF^{(p, i)} \right)^{\trsp}.
		\end{align*}
		The third equation is hold by the fact that $ \nabla_{\bW}^2 \bF^{(p, i)} = 0 $ and the last equation is follows by Eq. (\ref{eqn: 1}).
		Therefore, the hessian matrix is semi-definite because $ h_{\btheta} \left(\bF^{(p, i)}\right) \ge 0 $, $ 1 - h_{\btheta} \left(\bF^{(p, i)}\right) \ge 0 $ and
		\begin{align*}
		\bz^{\trsp} \nabla_{\bW}^2 \left[ - \log \left( h_{\btheta} \left(\bF^{(p, i)}\right) \right) \right] \bz = h_{\btheta} \left(\bF^{(p, i)}\right) \left( 1 - h_{\btheta} \left(\bF^{(p, i)}\right) \right) \left( \bz^{\trsp} \nabla_{\bW} \bF^{(p, i)} \btheta \right)^2 \ge 0, \; \forall \bz.
		\end{align*}
		Similarly, the hessian matrix of the latter one of loss function is also  semi-definite. Therefore, the joint loss function $ \mL (\bW) $ is convex w.r.t. $\bW$.
	\end{proof}

	\section{Details of fine-tuning \jing{algorithm in DCP}}
	\label{sec:finetune_algorithm}
	Let $L_p$ be the position of the inserted output in the $p$-th stage. $\bW$ denotes the model parameters. \jing{We apply forward propagation once}, and compute the additional loss $\mL_S^p$ and the final loss $\mL_f$.
	Then, we compute the gradients of $\mL_S^p$~\wrt~$\bW$, and update $\bW$ by
	\begin{equation}
	\bW \gets \bW - \gamma \frac{\partial \mL_S^p}{\partial \bW},
	\label{eq:update_w_by_lsp}
	\end{equation}
	where $\gamma$ denotes the learning rate.
	
	Based on the last $\bW$, we compute the gradient of $\mL_f$~\wrt~$\bW$, and update $\bW$ by
	\begin{equation}
	\bW \gets \bW - \gamma \frac{\partial \mL_f}{\partial \bW}.
	\label{eq:update_w_by_lf}
	\end{equation}
	
	The fine-tuning algorithm in \jing{DCP} is shown in Algorithm~\ref{alg:bp_multi_outputs}.
	
	\begin{algorithm}[H]
		\caption{Fine-tuning Algorithm in \jing{DCP}}
		\begin{algorithmic}
			\STATE {\bfseries Input:} Position of the inserted output $L_p$, model parameters $\bW$, the number of fine-tuning iteration $T$, learning rate $\gamma$, \zhuang{decay of learning rate $\tau$.}
			\FOR{Iteration $t=1$ to $T$}
			\STATE Randomly choose a mini-batch of samples from the training set.
			\STATE Compute gradient of $\mL_S^p$~\wrt~$\bW$: $\frac{\partial \mL_S^p}{\partial \bW}$.
			\STATE Update $\bW$ using Eq.~(\ref{eq:update_w_by_lsp}).
			\STATE Compute gradient of $\mL_f$~\wrt~$\bW$: $\frac{\partial \mL_f}{\partial \bW}$.
			\STATE Update $\bW$ using Eq.~(\ref{eq:update_w_by_lf}).
			\STATE \zhuang{$\gamma\gets\tau\gamma$.}
			\ENDFOR
		\end{algorithmic}
		\label{alg:bp_multi_outputs}
	\end{algorithm}

	\section{Channel pruning in a single \jing{block}}
	\label{sec:pruning_single_layer}
	To evaluate the effectiveness of our method on pruning channels in a single \jing{block}, we apply our method to each block in ResNet-18 \jing{separately}. We implement the algorithms in ThiNet~\cite{luo2017thinet}, APoZ~\cite{hu2016network} and weight sum~\cite{li2016pruning}, and compare the performance on ILSVRC-12 with pruning 30\% channels in the network. As shown in Figure~\ref{fig:prune_res_blocks}, our method outperforms the strategies of APoZ and weight sum significantly. Compared with ThiNet, our method achieves lower degradation of performance under the same pruning rate, especially in the deeper layers.
	
	\begin{figure}[!ht]
		\centering
		\includegraphics[width=0.5\columnwidth]{./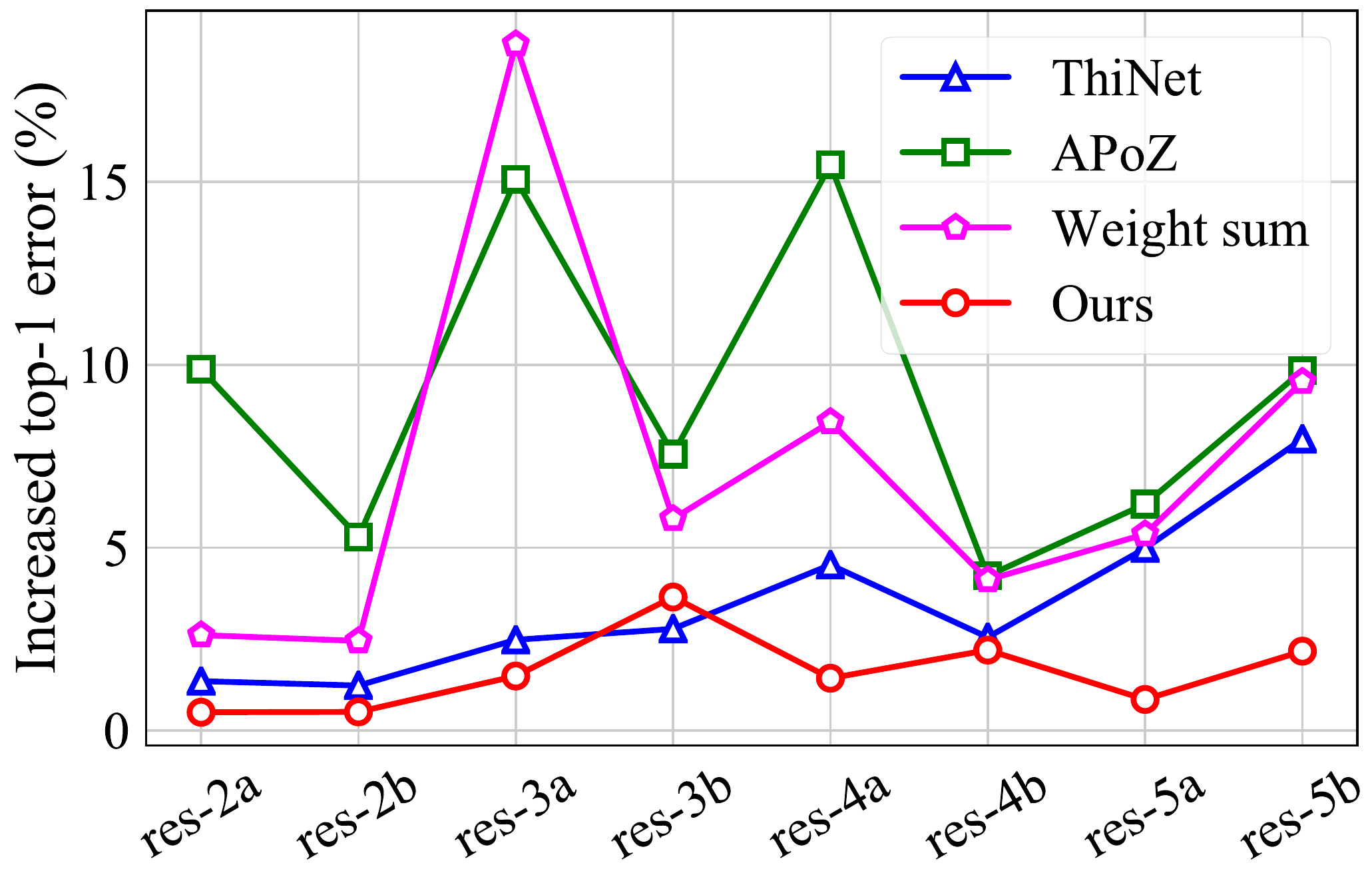}
		\caption{Pruning different blocks in ResNet-18. We report the increased top-1 error on ILSVRC-12.}
		\label{fig:prune_res_blocks}
	\end{figure}
	
	\jing{
		\section{Exploring the number of additional losses}
		\label{sec:explore_num_losses}
		To study the effect of the number of additional losses, we prune 50\% channels from ResNet-56 for $2 \times$ acceleration on CIFAR-10. As shown in Table~\ref{table:explore_num_losses}, adding too many losses may lead to little gain in performance but incur significant increase of computational cost. Heuristically, we find that adding losses every 5-10 layers is sufficient to make a good trade-off between accuracy and complexity.
	}
	
	\begin{table}[!ht]
		\centering
		\caption{Effect on the number of additional losses over ResNet-56 for $2 \times$ acceleration on CIFAR-10.}
		\scalebox{1}{
			\begin{tabular}{c||cccc}
				\hline
				\#additional losses & 3 & 5 & 7 & 9 \\ \hline\hline
				Error gap (\%) & +0.31 & +0.27 & +0.21 & +0.20 \\
				\hline
		\end{tabular}}
		\label{table:explore_num_losses}
	\end{table}
	
	\section{Exploring the number of samples}
	\label{sec:explore_num_samples}
	To study the influence of the number of samples on channel selection, we prune 30\% channels from ResNet-18 on ILSVRC-12 with different number of samples,~\ie, from 10 to 100k. Experimental results are shown in Figure~\ref{fig:explore_num_samples}.
	
	\begin{figure}[!ht]
		\centering
		\includegraphics[width=0.5\columnwidth]{./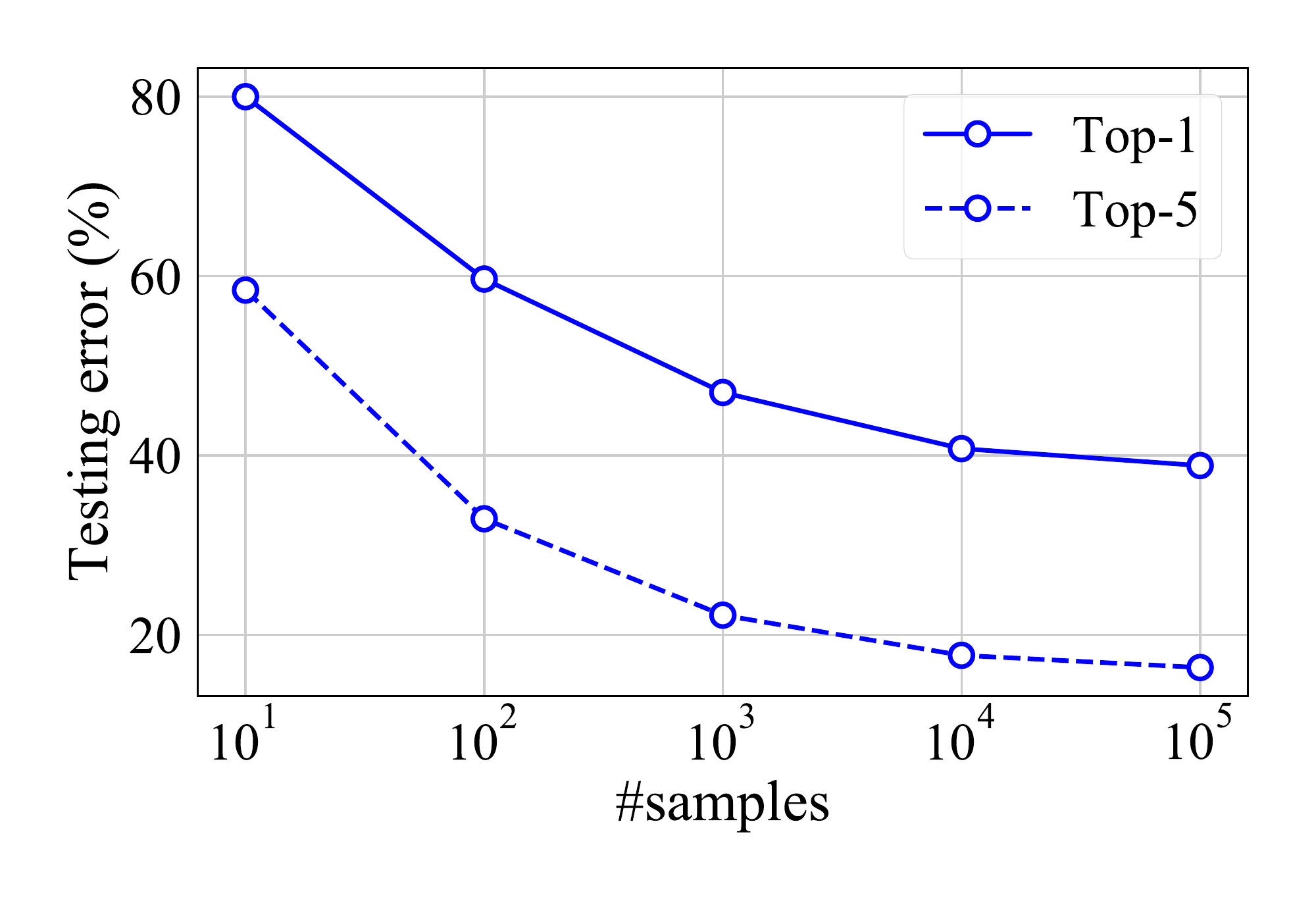}
		\caption{\jing{Testing error on ILSVRC-12 with different number of samples for channel selection.}} 
		\label{fig:explore_num_samples}
	\end{figure}
	
	In general, with more samples for channel selection, the performance degradation of the pruned model can be further reduced. However, it also leads to more expensive computation cost. To make a trade-off between performance and efficiency, we use 10k samples in our experiments for ILSVRC-12. For small datasets like CIFAR-10, we use the whole training set for channel selection.

	\jing{
		\section{Influence on the quality of pre-trained models}
		\label{sec:effect_pre_trained_models}
		To explore the influence on the quality of pre-trained models, we use intermediate models at epochs \{120, 240, 400\} from ResNet-56 for $2 \times$ acceleration as pre-trained models, which have different quality. From the results in Table~\ref{table:explore_pre_trained_quality}, DCP shows small sentity to the quality of pre-trained models. Moreover, given models of the same quality, DCP steadily outperforms the other two methods.
	}
	
	\begin{table}[!ht]
		\centering
		\caption{Influence of pre-trained model quality over ResNet-56 for $2 \times$ acceleration on CIFAR-10.}
		\scalebox{1}{
			\begin{tabular}{c||ccc}
				\hline
				Epochs (baseline error) & ThiNet & Channel pruning & DCP \\ \hline\hline
				120 (10.57\%) & +1.22 & +1.39 & +0.39 \\
				240 (6.51\%) & +0.92 & +1.02 & +0.36 \\
				400 (6.20\%) & +0.82 & +1.00 & +0.31 \\
				\hline
		\end{tabular}}
		\label{table:explore_pre_trained_quality}
	\end{table}

	\section{Pruning MobileNet v1 \jing{and MobileNet v2} on ILSVRC-12}
	\label{sec:mobilenet_on_ilsvrc12}
	\jing{We apply our DCP method to do channel pruning based on MobileNet v1 and MobileNet v2 on ILSVRC-12. The results are reported in Table~\ref{table:comparision_mobilenet}. Our method outperforms ThiNet~\cite{luo2017thinet} in top-1 error by $\textbf{0.75\%}$ and $\textbf{0.47\%}$ on MobileNet v1 and MobileNet v2, respectively. }	
	
	\begin{table}[!ht]
		\centering
		\caption{\zhuang{Comparisons of MobileNet v1 and MobileNet v2 on ILSVRC-12. "-" denotes that the results are not reported.}}
		\scalebox{0.9}{
			\begin{tabular}{c|c||cccc}
				\hline
				\multicolumn{2}{c||}{Model} & ThiNet~\cite{luo2017thinet} & WM~\cite{howard2017mobilenets, sandler2018inverted} & DCP \\ \hline\hline
				\multirow{4}{*}{\tabincell{c}{MobileNet v1 \\ (Baseline 31.15\%)}} & \#Param. $\downarrow$ & $2.00\times$ & $2.00\times$ & $2.00\times$ \\
				& \#FLOPs $\downarrow$ & $3.49\times$ & $3.49\times$ & $3.49\times$ \\
				& Top-1 gap (\%) & +4.67 & +6.90 & \textbf{+3.92} \\
				& Top-5 gap (\%) & +3.36 & - & \textbf{+2.71} \\
				\hline
				\multirow{4}{*}{\tabincell{c}{MobileNet v2 \\ (Baseline 29.89\%)}} & \#Param. $\downarrow$ & $1.35\times$ & $1.35\times$ & $1.35\times$ \\
				& \#FLOPs $\downarrow$ & $1.81\times$ & $1.81\times$ & $1.81\times$ \\
				& Top-1 gap (\%) & +6.36 & +6.40 & \textbf{+5.89} \\
				& Top-5 gap (\%) & +3.67 & +4.60 & +3.77 \\
				\hline
		\end{tabular}}
		\label{table:comparision_mobilenet}
	\end{table}
	
	\section{\jing{Complexity of the pruned models}}
	\label{sec:model_complexity}
	We report the model complexities of our pruned models~\wrt~different pruning rates in Table~\ref{table:pruned_models_imagenet} \jing{and Table}~\ref{table:pruned_models_cifar10}. We evaluate the forward/backward running time on CPU/GPU. We perform the evaluations on a workstation with two Intel Xeon-E2630v3 CPU and a NVIDIA TitanX GPU. The mini-batch size is set to 32. \jing{Normally, as channels pruning removes the whole channels and related filters directly, it reduces the number of parameters and FLOPs of the network, resulting in acceleration in forward and backward propagation. We also report the} speedup of running time~\wrt~the pruned ResNet18 and ResNet50 under different pruning rates \jing{in Figure~\ref{fig:resnet_running_time}}. The speedup on CPU is higher than GPU. \jing{Although the pruned ResNet-50 with the pruning rate of 50\% has similar computational cost to the ResNet-18}, it requires 2.38$\times$ GPU running time and 1.59$\times$ CPU running time to ResNet-18. 
	\jing{One possible reason is} that wider networks are more efficient than deeper networks, as it can be efficiently paralleled on \jing{both CPU} and GPU.
	
	\begin{table}[!ht]
		\centering
		\small
		\caption{Model complexity of the pruned ResNet-18 and ResNet-50 on ILSVRC-2012. f./b. indicates the forward/backward time tested on one NVIDIA TitanX GPU or two Intel Xeon-E2630v3 CPU with a mini-batch size of 32.}
		\scalebox{0.9}
		{
			\begin{tabular}{c||ccccccc}
				\hline
				\multirow{2}{*}{Network} & \multirow{2}{*}{Prune rate (\%)} & \multirow{2}{*}{\#Param.} & \multirow{2}{*}{\#FLOPs} & \multicolumn{2}{c}{GPU time (ms)} & \multicolumn{2}{c}{CPU time (s)} \\ \cline{5-8}
				& & & & f./b. & Total & f./b. & Total \\ \hline\hline
				\multirow{4}{*}{ResNet-18} & 0 & $1.17\times 10^7$ & $1.81\times 10^9$ & 14.41/33.39 & 47.80 & 2.78/3.60 & 6.38 \\
				& 30 & $8.41\times 10^6$ & $1.32\times 10^9$ & 13.10/29.40 & 42.50 & 2.18/2.70 & 4.88 \\
				& 50 & $6.19\times 10^6$ & $9.76\times 10^8$ & 10.68/25.22 & 35.90 & 1.74/2.18 & 3.92 \\
				& 70 & $4.01\times 10^6$ & $6.49\times 10^8$  & 9.74/22.60 & 32.34 & 1.46/1.75 & 3.21 \\ \hline
				\multirow{4}{*}{ResNet-50} & 0 & $2.56\times 10^7$ & $4.09\times 10^9$ & 49.97/109.69 & 159.66 & 6.86/9.19 & 16.05 \\
				& 30 & $1.70\times 10^7$ & $2.63\times 10^9$ & 43.86/96.88 & 140.74 & 5.48/6.89 & 12.37 \\
				& 50 & $1.24\times 10^7$ & $1.82\times 10^9$ & 35.48/78.23 & 113.71 & 4.42/5.74 & 10.16\\
				& 70 & $8.71\times 10^6$ & $1.18\times 10^9$ & 33.28/72.46 & 105.74 & 3.33/4.46 & 7.79\\
				\hline
			\end{tabular}
		}
		\label{table:pruned_models_imagenet}
	\end{table}
	\begin{table}[!ht]
		\centering
		\caption{Model complexity of the pruned ResNet-56 and VGGNet on CIFAR-10.}
		\scalebox{0.9}
		{
			\begin{tabular}{c||ccc}
				\hline
				Network & Prune rate (\%) & \#Param. & \#FLOPs\\ \hline\hline
				\multirow{4}{*}{ResNet-56} & 0 & $8.56\times 10^5$ & $1.26\times 10^8$ \\
				& 30 & $6.08\times 10^5$ & $9.13\times 10^7$ \\
				& 50 & $4.31\times 10^5$ & $6.32\times 10^7$ \\
				& 70 & $2.71\times 10^5$ & $3.98\times 10^7$ \\ \hline
				\multirow{3}{*}{VGGNet} & 0 & $2.00\times 10^7$ & $3.98\times 10^8$ \\
				& 30 & $1.04\times 10^7$ & $1.99\times 10^8$ \\
				& 40 & $7.83\times 10^6$ & $1.47\times 10^8$ \\ \hline
				\multirow{2}{*}{MobileNet} & 0 & $3.22\times 10^6$ & $2.13\times 10^8$ \\
				& 30 & $2.25\times 10^6$ & $1.22\times 10^8$ \\
				\hline
			\end{tabular}
		}
		\label{table:pruned_models_cifar10}
	\end{table}
	
	\begin{figure}[!ht]
		\centering
		\includegraphics[width=0.51\columnwidth]{./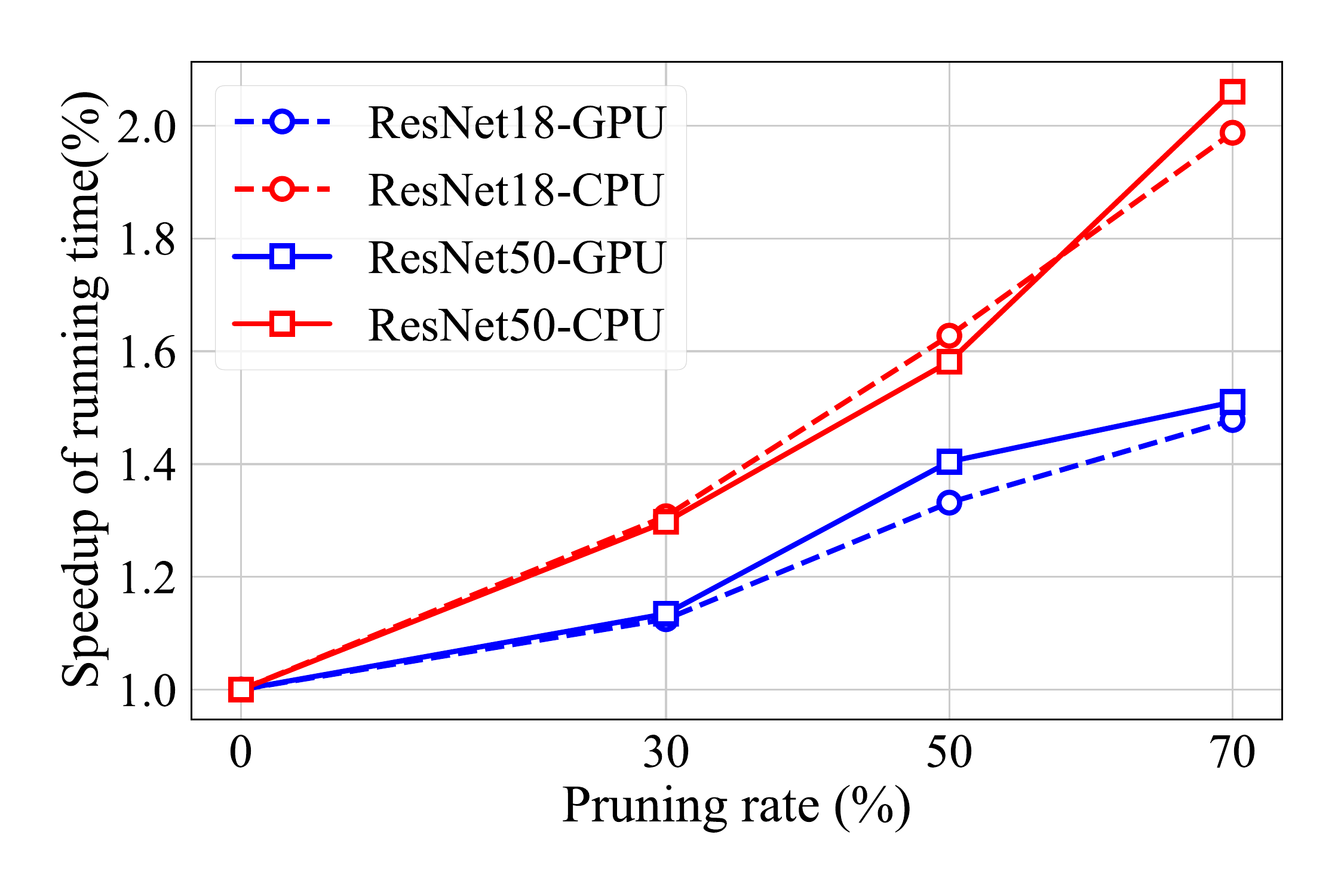}
		\caption{Speedup of running time~\wrt\jing{~}ResNet18 and ResNet50 under different pruning rates.}
		\label{fig:resnet_running_time}
	\end{figure}
	
	\begin{figure}[!ht]
		\centering
		\includegraphics[width=0.51\columnwidth]{./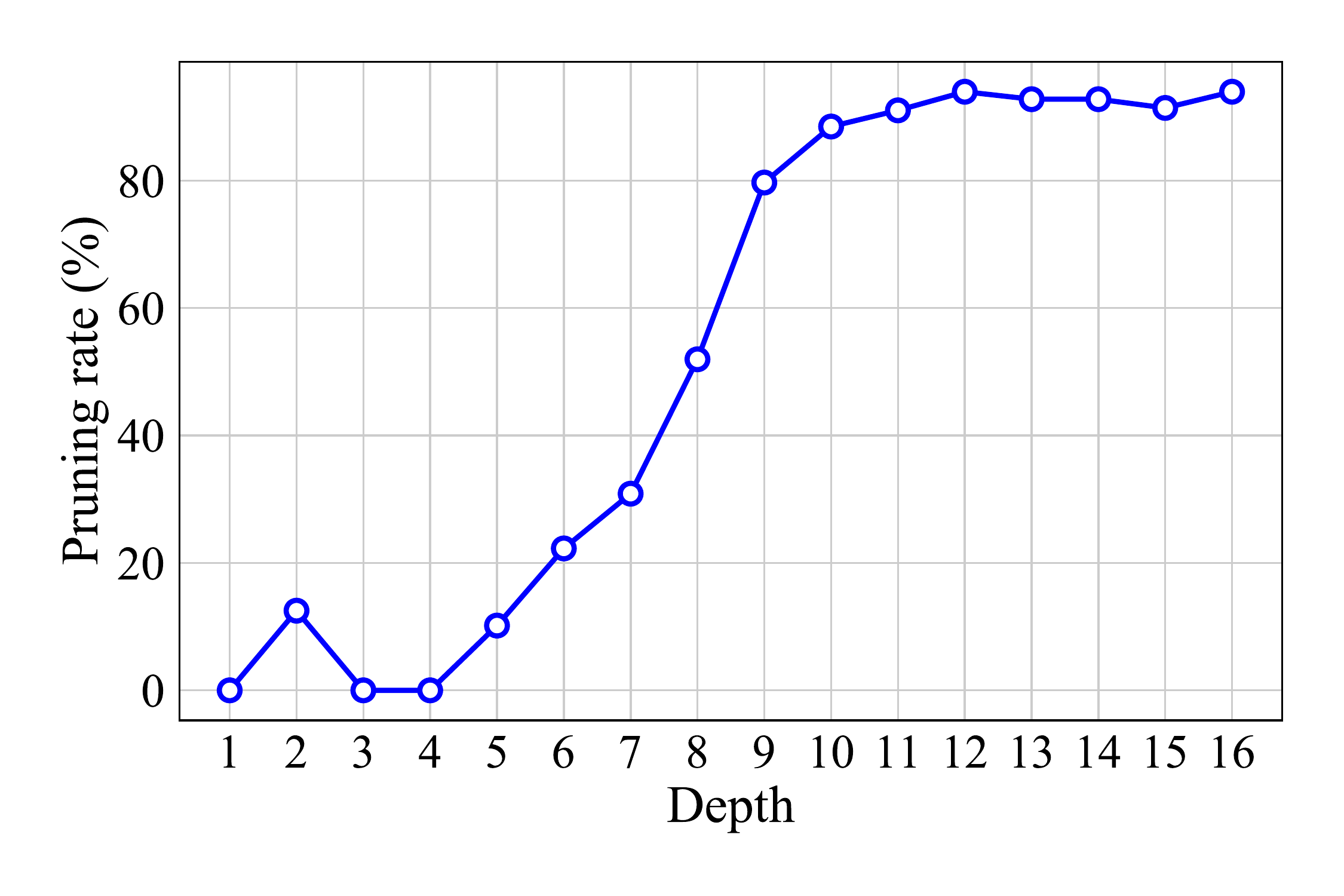}
		\caption{Pruning rates~\wrt~each layer in VGGNet. \jing{We measure the pruning rate by the ratio of pruned input channels.}}
		\label{fig:prune_vgg19_prune_rate}
	\end{figure}

	\newpage
	\section{Detailed structure of the pruned VGGNet}
	\label{sec:vgg_structure}
	\jing{We show the detailed structure and pruning rate of a pruned VGGNet obtained from \textit{DCP-Adapt} on CIFAR-10 dataset in Table~\ref{table:structure_adaptive_prune_vgg} and Figure~\ref{fig:prune_vgg19_prune_rate}, respectively. Compared with the original network, the pruned VGGNet has lower layer complexities, especially in the deep layer.}
	
	\begin{table}[!ht]
		\centering
		\caption{Detailed structure of the pruned VGGNet obtained from DCP-Adapt. \jing{"\#Channel" and "\#Channel$^*$"} indicates the number of input channels of convolutional layers in the original VGGNet (testing error 6.01\%) and a pruned VGGNet (testing error 5.43\%) respectively.}
		\scalebox{0.9}{
			\begin{tabular}{c||ccc}
				\hline
				Layer & \jing{\#Channel} & \jing{\#Channel$^*$} & Pruning rate (\%) \\ \hline\hline
				conv1-1 & 3 & 3 & 0 \\
				conv1-2 & 64 & 56 & 12.50 \\
				conv2-1 & 64 & 64 & 0 \\
				conv2-2 & 128 & 128 & 0 \\
				conv3-1 & 128 & 115 & 10.16 \\
				conv3-2 & 256 & 199 & 22.27 \\
				conv3-3 & 256 & 177 & 30.86 \\
				conv3-4 & 256 & 123 & 51.95 \\
				conv4-1 & 256 & 52 & 79.69 \\
				conv4-2 & 512 & 59 & 88.48 \\
				conv4-3 & 512 & 46 & 91.02 \\
				conv4-4 & 512 & 31 & 93.94 \\
				conv4-5 & 512 & 37 & 92.77 \\
				conv4-6 & 512 & 37 & 92.77 \\
				conv4-7 & 512 & 44 & 91.40 \\
				conv4-8 & 512 & 31 & 93.94 \\
				\hline
		\end{tabular}}
		\label{table:structure_adaptive_prune_vgg}
	\end{table}

	\section{More visualization of feature maps}
	\label{sec:more_visualization}
	\jing{In Section~\ref{sec:visualize_feature}, we have revealed the visualization of feature maps~\wrt~the pruned/selected channels. In this section, we provide more results of feature maps \wrt~different input images, which are shown in Figure~\ref{fig:visualization_feature_maps}. According to Figure~\ref{fig:visualization_feature_maps}, the selected channels contain much more information than the pruned ones.}
	
	\begin{figure}[!ht]
		\centering
		\subfigure{
			\includegraphics[width=0.95\columnwidth]{./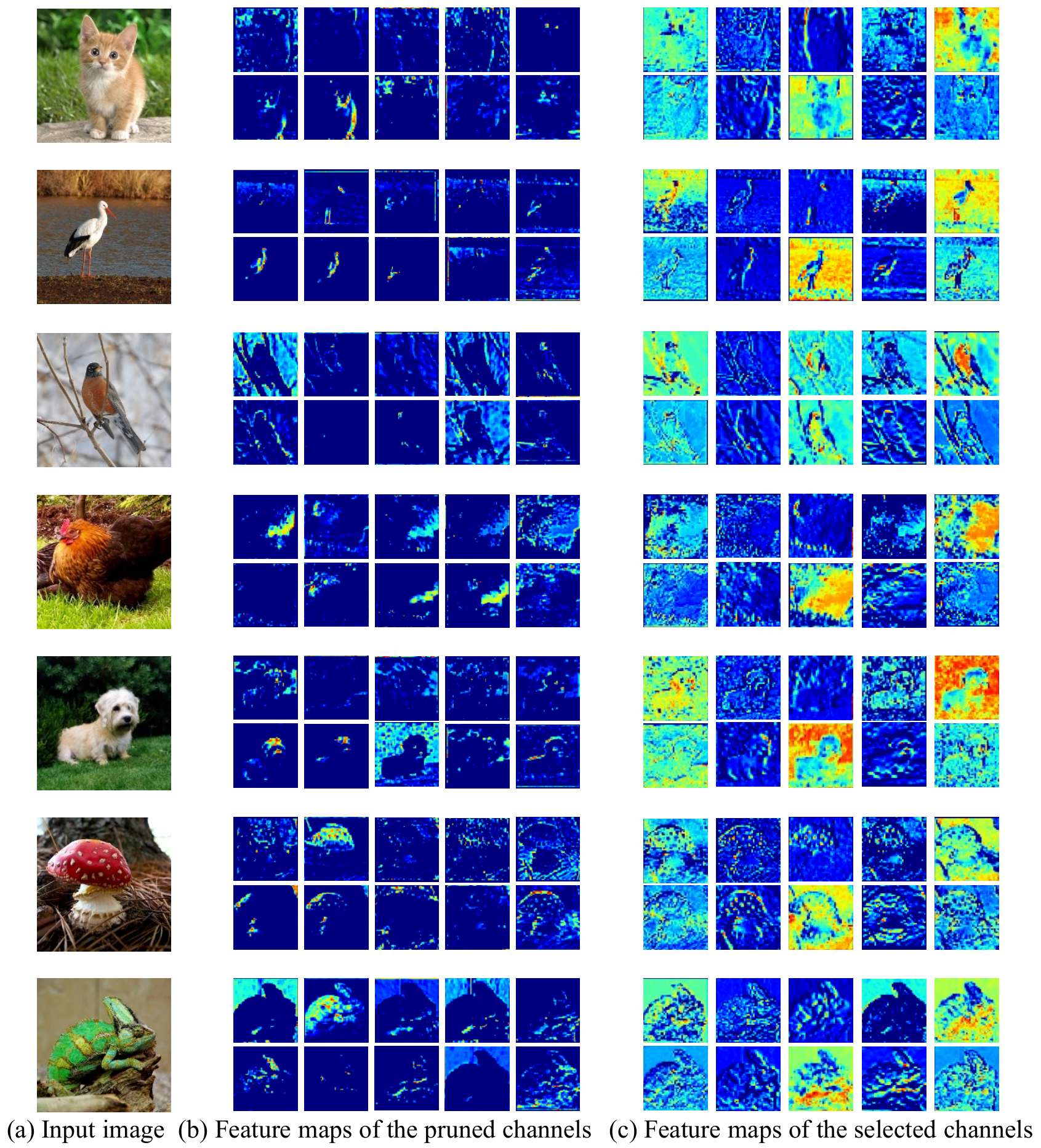}}
		\caption{Visualization of feature maps~\wrt~the pruned/selected channels of res-2a in ResNet-18.}
		\label{fig:visualization_feature_maps}
	\end{figure}

\end{document}